\DeclareMathAlphabet{\pazocal}{OMS}{zplm}{m}{n}
\numberwithin{equation}{section}
\newtheorem{theorem}{Theorem}[section]
\newtheorem{lemma}[theorem]{Lemma}
\newtheorem{algorithm}[theorem]{Algorithm}
\newtheorem{definition}[theorem]{Definition}
\numberwithin{equation}{section}
\theoremstyle{definition}
\newtheoremstyle{myremarkstyle}{}{}{}{}{\bfseries}{.}{ }{}
\theoremstyle{myremarkstyle}
\declaretheorem[name=Remark,qed=$\blacksquare$,numberlike=theorem]{remark}
\newcommand*{\intavg}{%
  \mint@l{-}{}%
}
\newcommand*{\mint@l}[2]{%
  \@ifnextchar\limits{%
    \mint@l{#1}%
  }{%
    \@ifnextchar\nolimits{%
      \mint@l{#1}%
    }{%
      \@ifnextchar\displaylimits{%
        \mint@l{#1}%
      }{%
        \mint@s{#2}{#1}%
      }%
    }%
  }%
}
\newcommand*{\mint@s}[2]{%
  \@ifnextchar_{%
    \mint@sub{#1}{#2}%
  }{%
    \@ifnextchar^{%
      \mint@sup{#1}{#2}%
    }{%
      \mint@{#1}{#2}{}{}%
    }%
  }%
}
\def\mint@sub#1#2_#3{%
  \@ifnextchar^{%
    \mint@sub@sup{#1}{#2}{#3}%
  }{%
    \mint@{#1}{#2}{#3}{}%
  }%
}
\def\mint@sup#1#2^#3{%
  \@ifnextchar_{%
    \mint@sub@sup{#1}{#2}{#3}%
  }{%
    \mint@{#1}{#2}{}{#3}%
  }%
}
\def\mint@sub@sup#1#2#3^#4{%
  \mint@{#1}{#2}{#3}{#4}%
}
\def\mint@sup@sub#1#2#3_#4{%
  \mint@{#1}{#2}{#4}{#3}%
}
\newcommand*{\mint@}[4]{%
  \mathop{}%
  \mkern-\thinmuskip
  \mathchoice{%
    \mint@@{#1}{#2}{#3}{#4}%
        \displaystyle\textstyle\scriptstyle
  }{%
    \mint@@{#1}{#2}{#3}{#4}%
        \textstyle\scriptstyle\scriptstyle
  }{%
    \mint@@{#1}{#2}{#3}{#4}%
        \scriptstyle\scriptscriptstyle\scriptscriptstyle
  }{%
    \mint@@{#1}{#2}{#3}{#4}%
        \scriptscriptstyle\scriptscriptstyle\scriptscriptstyle
  }%
  \mkern-\thinmuskip
  \int#1%
  \ifx\\#3\\\else_{#3}\fi
  \ifx\\#4\\\else^{#4}\fi  
}
\newcommand*{\mint@@}[7]{%
  \begingroup
    \sbox0{$#5\int\m@th$}%
    \sbox2{$#5\int_{}\m@th$}%
    \dimen2=\wd0 %
    \let\mint@limits=#1\relax
    \ifx\mint@limits\relax
      \sbox4{$#5\int_{\kern1sp}^{\kern1sp}\m@th$}%
      \ifdim\wd4>\wd2 %
        \let\mint@limits=\nolimits
      \else
        \let\mint@limits=\limits
      \fi
    \fi
    \ifx\mint@limits\displaylimits
      \ifx#5\displaystyle
        \let\mint@limits=\limits
      \fi
    \fi
    \ifx\mint@limits\limits
      \sbox0{$#7#3\m@th$}%
      \sbox2{$#7#4\m@th$}%
      \ifdim\wd0>\dimen2 %
        \dimen2=\wd0 %
      \fi
      \ifdim\wd2>\dimen2 %
        \dimen2=\wd2 %
      \fi
    \fi
    \rlap{%
      $#5%
        \vcenter{%
          \hbox to\dimen2{%
            \hss
            $#6{#2}\m@th$%
            \hss
          }%
        }%
      $%
    }%
  \endgroup
}
\def\XXint#1#2#3{{\setbox0=\hbox{$#1{#2#3}{\int}$ }
		\vcenter{\hbox{$#2#3$ }}\kern-.6\wd0}}
\renewcommand{\geq}{\geqslant}
\renewcommand{\leq}{\leqslant}
\renewcommand{\epsilon}{\varepsilon}
\renewcommand{\phi}{\varphi}
\newcommand{\R}{\mathbb{R}}
\newcommand{\N}{\mathbb{N}}
\newcommand{\U}{{\bf U}}		
\newcommand{\Prob}{\EuScript{P}}
\newcommand{\map}{\EuScript{L}}
\newcommand{\train}{\EuScript{S}}
\newcommand{\reg}{\EuScript{R}}
\newcommand{\er}{\EuScript{E}}
\begin{document}

\date{\today}

\title{Enhancing accuracy of deep learning algorithms \\by training with low-discrepancy sequences}

\author{Siddhartha Mishra \thanks{Seminar for Applied Mathematics (SAM), D-Math \newline
  ETH Z\"urich, R\"amistrasse 101, 
  Z\"urich-8092, Switzerland} and
  T. Konstantin Rusch \thanks{Seminar for Applied Mathematics (SAM), D-Math \newline
  ETH Z\"urich, R\"amistrasse 101, 
  Z\"urich-8092, Switzerland.}}

\date{\today}

\maketitle
\begin{abstract}
We propose a deep supervised learning algorithm based on low-discrepancy sequences as the training set. By a combination of theoretical arguments and extensive numerical experiments we demonstrate that the proposed algorithm significantly outperforms standard deep learning algorithms that are based on randomly chosen training data, for problems in moderately high dimensions. The proposed algorithm provides an efficient method for building inexpensive surrogates for many underlying maps in the context of scientific computing.    
\end{abstract}
\section{Introduction}
A fundamental objective of computer simulation is the \emph{prediction} of the response of a physical or biological system to inputs. This is formalized in terms of \emph{evaluation} of an underlying function (map, observable) for different inputs, i.e. 
$$
{\rm Compute}~\map(y),\quad {\rm for~}y\in Y.
$$
Here, the underlying function $\map: Y \mapsto Z$, for some $Y \subset \R^d$, possibly for $d >> 1$ and $Z$ is a finite or even an infinite dimensional Banach space.

For many interesting systems in physics and engineering, the evaluation of the underlying map $\map$ requires the (approximate) solution of ordinary or partial differential equations. Prototypical examples for such $\map$'s include the response of an electric circuit to a change in input current, the change in bulk stresses on a structure on account of the variation of the load, the mean global sea surface temperatures that result from different levels of $CO_2$ emissions and the lift and the drag across the wing of an aircraft for different operating conditions such as mach number and angle of attack of the incident flow.

The cost of computing the underlying map $\map$ for different inputs can be very high, for instance when evaluating $\map$ requires computing solutions of PDEs in several space dimensions. Moreover, a large number of problems of interest are of the \emph{many query} type, i.e. many different instances of the underlying map have to be evaluated for different inputs. These problems can arise in the context of prediction (predict the wave height at a buoy when the tsunami is triggered by an earthquake with a certain initial wave displacement), uncertainty quantification (calculate statistics of $\map$ for uncertain inputs such as the mean and the variance of the stress due to random loading), optimal design and control (design the wing shape to minimize drag for constant lift) and (Bayesian) inverse problems (calibrate parameters in a climate model to match observed mean sea surface temperatures). The computational cost of such many query problems, requiring a large number of computational PDE solves, can be prohibitively expensive, even on state of the art high-performance computing (HPC) platforms.  

One paradigm for reducing the computational cost of such many query problems consists of generating \emph{training data}, i.e. computing $\map(y)$, $\forall y \in \train$, with $\train \subset Y$ denoting a \emph{training set}. These computations are done in an \emph{offline} mode. Then, a \emph{surrogate model} is constructed by designing a surrogate map, $\hat{\map}: Y \mapsto Z$ such that $\hat{\map}(y) \approx \map(y)$, for all $y \in \train$. Finally, in an \emph{online} step, one evaluates $\hat{\map}(y)$ with $y \in Y \setminus \train$ to perform the prediction. This surrogate model will be effective as long as $\map \approx \hat{\map}$ in a suitable sense and the cost of evaluating the surrogate map $\hat{\map}$ is significantly lower than the cost of evaluating the underlying map $\map$. Examples of such surrogate models include reduced order models \cite{ROMbook} and Gaussian process regression \cite{GPRbook}. 

A particularly attractive class of such surrogate models are \emph{deep neural networks} \cite{DLbook}, i.e. functions formed by concatenated compositions of affine transformations and scalar non-linear activation functions. Deep neural networks have been extremely successful at diverse tasks in science and engineering \cite{DL-nat} such as at image and text classification, computer vision, text and speech recognition, autonomous systems and robotics, game intelligence and even protein folding \cite{deepfold}. 

Deep neural networks are also being increasingly used in different contexts in scientific computing. A very incomplete sample of this rapidly growing field is in solutions of PDEs by so-called physics informed neural networks (PINNs) \cite{Lag1,KAR1,KAR2} and references therein, solutions of high-dimensional PDEs, particularly in finance \cite{E1,E2,E3} and improving the efficiency of existing numerical methods for PDEs, for instance in \cite{INC,DR1,DL_SM1} and references therein.

Almost all of the afore-mentioned papers in scientific computing use deep neural networks in the context of \emph{supervised learning} \cite{DLbook}, i.e. training the tuning parameters (weights and biases) of the neural network to minimize the so-called loss function (difference between the underlying map and the neural network surrogate on the training set in some suitable norm) with a stochastic gradient descent method. 

The accuracy of this supervised learning procedure is often expressed in terms of the so-called \emph{generalization error} or population risk \cite{MLbook} (see \eqref{eq:egen1} for the precise definition). The generalization error measures the error of the network on unseen data and it is notoriously hard to estimate it sharply \cite{AR1,NEYS1} and references therein. Nevertheless, it is customary to estimate it in terms of the following bound \cite{MLbook},
\begin{equation}
    \label{eq:genE1}
    \er_G \sim \er_T + \frac{U}{\sqrt{N}},
\end{equation}
with $\er_G$ being the generalization error. Here, $N$ is the number of training samples and $\er_T$ is the so-called \emph{training error} or \emph{empirical risk}, that is readily computed from the loss function (see \eqref{eq:etrain} for a definition).

When the training samples are chosen randomly from the underlying probability distribution on $Y$, one can think of the upper bound $U$ in  \eqref{eq:genE1} as a sort of \emph{standard deviation} of the underlying map $\map$ (see section 2.3, eqn (2.22) of a recent paper \cite{LMM1} for a precise estimate and discussion of this issue). Even with this bound \eqref{eq:genE1}, one immediately confronts the following challenges for using 
neural networks as surrogates in this context:
\begin{itemize}
    \item The bound \eqref{eq:genE1} stems from an application of the central limit theorem and indicates a slow decay of the error with respect to the number of training samples. In particular, as long as the standard deviation of the underlying map $\map$ is ${\mathcal O}(1)$, we require a large number of training samples to obtain low generalization errors. For instance, ${\mathcal O}(10^4)$ training samples are needed in order to obtain a generalization error of $1\%$ relative error. Computing such large number of training samples by evaluating the underlying PDEs is very expensive. 
    \item Furthermore, the upper bound $U \sim {\rm std}(\map)$ only holds under the assumption of (statistical) independence of the evaluations of the trained neural network. This is not necessarily true as the training process (stochastic gradient descent) can lead to \emph{strong correlations} between evaluations of the neural network, at different points of the training set. Estimating these correlations is the fundamental problem in obtaining sharp upper bounds on the generalization error \cite{AR1,EMW1} and tools from statistical learning theory \cite{CS1} such as the Vapnik-Chervonenkis (VC) dimension and Rademacher complexity \cite{MLbook} are often used for this bound. Consequently, the estimate \eqref{eq:genE1} can indicate an even slower decay of the error. 
\end{itemize}
As generating training data is expensive, the preceding discussion makes it clear that \emph{learning} maps, at least in the context of scientific computing, with randomly chosen training samples can be hard. This is in contrast to the \emph{big data} successes of deep learning in computer science where generating and accessing data (images, text, speech) is cheap. 

One remedy for this problem would be to use \emph{variance reduction procedures} to reduce the standard deviation of the underlying map in \eqref{eq:genE1}. A robust variance reduction method is the so-called multi-level procedure \cite{HEIN1,GIL1} and it was adapted to design a \emph{multi-level training algorithm} for deep neural networks in the recent paper \cite{LMM1}. However, this multi-level training procedure, in conjunction with randomly chosen training points, is efficient only when the amplitude of correlations in the training process is low (see \cite{LMM1}) and might not be suitable for many problems. 

Another possible solution to this problem of finding accurate neural network surrogates with a few training samples, was presented in a recent paper \cite{LMR1}, where the authors proposed selecting \emph{low-discrepancy sequences} as training sets for supervising learning with deep neural networks. Such low-discrepancy or \emph{quasi-random} sequences were developed in the context of Quasi-Monte Carlo (QMC) methods for numerical integration \cite{CAF1,DKS1} and references therein and many variants of these are available. The key point in using low-discrepancy sequences for numerical integration is the fact that the resulting quadrature errors decay linearly with respect to the number of quadrature points (up to logarithmic corrections), at least as long the underlying integration is over a domain in moderately high dimensions, resulting in a clear gain over the competing Monte Carlo algorithms.

The heuristic argument presented in \cite{LMR1} to motivate the use of low-discrepancy sequences to train deep neural networks was based on the \emph{equidistribution} property of these sequences, i.e. these sequences fill the underlying domain more uniformly than random points and hence can better represent the underlying map $\map$. Although the numerical results with such an algorithm as presented in \cite{LMR1}, \cite{LMM1} (also for a multi-level version of this algorithm) were very convincing, no rigorous analysis was provided to explain the results and justify the underlying intuition. 

The fundamental goal of the present paper is to describe and rigorously analyze this QMC type deep learning algorithm. To this end, in this paper we will
\begin{itemize}
    \item Formalize and present a deep learning algorithm, based on choosing low-discrepancy sequences as training sets for supervised learning in deep neural networks. 
    \item Analyze this algorithm by presenting upper bounds on the generalization error and using these upper bounds to constrain the choice of activation functions.
    \item Present a wide range of numerical experiments to illustrate the algorithm and the derived bounds as well as to demonstrate the advantage of the proposed algorithm over the standard supervised deep learning algorithm with randomly chosen training data. 
\end{itemize}
Our aim would be to convince the reader of the efficacy of the proposed deep learning algorithm for a large range of applications in scientific computing, particularly for problems in moderately high dimensions.

The rest of the paper is organized as follows: in section \ref{sec:2}, we present the proposed algorithm. The algorithm is analyzed and upper bounds on the generalization error are derived in section \ref{sec:3} and numerical experiments, illustrating the algorithm, are presented in section \ref{sec:4}. 

\section{The deep learning algorithm}
\label{sec:2}
\subsection{Problem setup}
We consider an underlying domain $Y \subset \R^d$ and for definiteness we let $Y = [0,1]^d$ be the unit cube in $d$ dimensions. Moreover, we also consider an underlying probability measure $\mu \in \Prob(Y)$ and further assume that this measure is absolutely continuous with respect to the $d$-dimensional Lebesgue measure and has a density given by $\bar{\mu} \in C(Y;\R_+)$ with 
$$
\int\limits_{Y} \bar{\mu}(y) dy = 1.
$$
Our objective is to find efficient surrogates for maps of the form,
\begin{equation}
\label{eq:map}
\map: Y \mapsto \R^m.
\end{equation}
Without loss of generality, we take $m=1$. 

A prototypical example for the map $\map$ comes from time-dependent \emph{parametric PDEs} of the form, 
\begin{equation}
\label{eq:ppde}
\begin{aligned}
\partial_t \U(t,x,y) &= L\left(y, \U, \nabla_x \U, \nabla^2_x \U, \ldots \right), \quad \forall~(t,x,y) \in [0,T] \times D(y) \times Y, \\
\U(0,x,y) &= \overline{\U}(x,y), \quad \forall~ (x,y) \in D(y) \times Y, \\
L_{b} \U(t,x,y) &= \U_b (t,x,y), \quad \forall ~(t,x,y) \in [0,T] \times \partial D(y) \times Y.
\end{aligned}
\end{equation}
Here, $Y$ is the underlying parameter space. The spatial domain is labeled as $y \rightarrow D(y) \subset \R^{d_s}$ and  $\U: [0,T] \times D \times Y  \rightarrow \R^M$ is the vector of unknowns. The differential operator $L$ is in a very generic form and can depend on the gradient and Hessian of $\U$, and possibly higher-order spatial derivatives. For instance, the heat equation as well as the Euler or Navier-Stokes equations of fluid dynamics are specific examples. Moreover, $L_b$ is a generic operator for imposing boundary conditions. The parametric nature of the PDE \eqref{eq:ppde}, represented by the parameter space $Y$, can stem from uncertainty quantification or Bayesian inversion problems where the parameter space models uncertainties in inputs for the underlying PDE. The parametric nature can also arise from optimal design, control and PDE constrained optimization problems with $Y$ being the design (control) space.

For the parameterized PDE \eqref{eq:ppde}, we consider the following generic form of \emph{observables},
\begin{equation}
\label{eq:obsp}
L_g(y,\U) := \int\limits_0^T\int\limits_{D(y)} \psi(x,t) g(\U(t,x,y)) dx dt, \quad {\rm for} ~\mu~{\rm a.e}~y \in Y.
\end{equation} 
Here, $\psi \in L^1_{{\rm loc}} (D(y) \times (0,T))$ is a  \emph{test function} and $g \in C^s(\R^M)$, for $s \geq 1$. 

For fixed functions $\psi,g$, we define the \emph{parameters to observable} map:
\begin{equation}
\label{eq:ptoob}
\map:y \in Y \rightarrow \map(y) = L_g(y,\U),
\end{equation}
with $L_g$ being defined by \eqref{eq:obsp}. In general, it is only possible to evaluate $\map$ up to numerical error, by approximating it with a suitable numerical method.  

We will design a deep learning algorithm for approximating the underlying map $\map$ \eqref{eq:map} in the following steps.

\subsection{Training set}
As is customary in supervised learning (\cite{DLbook} and references therein), we need to generate or obtain data to train the network. To this end, we fix $N \in \N$ and select a set of points $\EuScript{S} = \{y_i\}_{1 \leq i \leq N}$, with each $y_i \in Y$. It is standard that the points in the training set $\EuScript{S}$ are chosen randomly from the parameter space $Y$, independently and identically distributed with the measure $\mu$. However, in this paper we follow \cite{LMR1} and choose \emph{low-discrepancy sequences} for our training set. 

\subsection{Low-discrepancy sequences}
Consider any sequence of points $\left\{y_n\right\}_{1 \leq n \leq N}$ with $y_n \in Y = [0,1]^d$ for all $n$, and for any set $J \subset Y$ denote
\begin{equation}
    \label{eq:rnj}
    R_N(J) := \frac{1}{N} \# \left \{y_n \in J\right \} - {\rm meas}(J).
\end{equation}
Note that \eqref{eq:rnj} is the error of approximating the Lebesgue measure ${\rm meas}$ of the set $J \subset Y$ with the sum $\frac{1}{N} \sum\limits_{n=1}^N \chi_J (y_n)$, with $\chi$ denoting the indicator function of the underlying set. 

Instead of considering arbitrary subsets of $Y$, we focus on \emph{rectangular sets} \cite{CAF1}. To define a $d$-dimensional rectangle, we need only two antipodal vertices. In particular, we consider the following rectangular sets, 
\begin{equation*}
    E^{\ast} := \left\{J \subset Y: J~{\rm is~a~rectangular~set~with~one~vertex}~0~{\rm another~vertex}~z\in Y \right\}.
\end{equation*}
Then, we follow \cite{CAF1} and define the star-discrepancy or simply, discrepancy, of the sequence $\{y_n\}$ as,
\begin{equation}
    \label{eq:dstar}
    D^{\ast}_N = \sup\limits_{J \in E^{\ast}} |R_N (J)|.
\end{equation}
Roughly speaking, the discrepancy measures how well the sequence fills the underlying domain and consequently, how well is the measure of an arbitrary rectangular subset $J$ of $Y$ approximated by the sum $\frac{1}{N} \sum\limits_{n=1}^N \chi_J (y_n)$.  
We are now in a position to follow \cite{CAF1} and define low-discrepancy sequences below.
\begin{definition}
\label{def:lds}
{\bf Low-discrepancy sequence}: A sequence of points $\left\{y_n\right\}_{1 \leq n \leq N}$ with $y_n \in Y$ is termed as a low-discrepancy sequence if the following holds for its discrepancy $D_N^{\ast}$ \eqref{eq:dstar}:
\begin{equation}
    \label{eq:lds}
    D_N^{\ast} \leq C\frac{(\log N)^d}{N},
\end{equation}
with a constant $C$ that possibly depends on the dimension $d$ but is independent of $N$.
\end{definition}
It turns out that there are different types of low-discrepancy sequences, many of them are built from number theoretic considerations. The simplest low-discrepancy sequence is the one-dimensional van der corput sequence. Widely used low-discrepancy sequences (in arbitrary dimensions) are the Sobol \cite{sobol}, Halton \cite{hal}, Owen \cite{owen1} and Niederreiter \cite{nd1} sequences.  

It is also straightforward to see that for a sequence of randomly chosen points, we have
\begin{equation}
    \label{eq:rdisc1}
    D_N^{\ast} \sim {\mathcal O}\left(\frac{1}{\sqrt{N}} \right).
\end{equation}
Thus, randomly chosen points do not constitute a low-discrepancy sequence as defined in \ref{def:lds}. This provides a formalism to the intuition that a sequence of low-discrepancy points fill the underlying space more uniformly than random points.  

Once the training set $\train$ is chosen, we perform a set of simulations to obtain $\map (y)$, for all $y \in \train$. This might involve expensive simulations of the underlying PDE \eqref{eq:ppde}. 

 \begin{figure}[htbp]
\centering
\includegraphics[width=8cm]{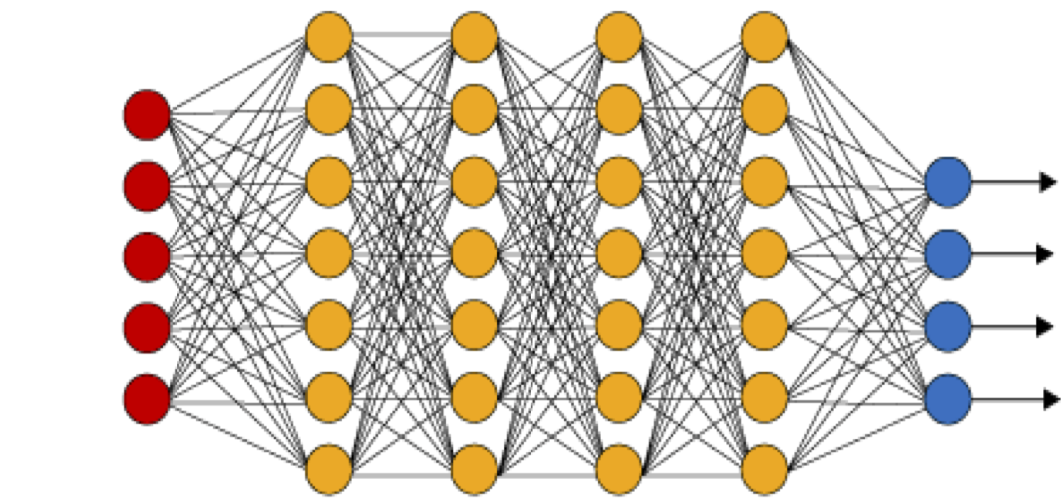}
\caption{An illustration of a (fully connected) deep neural network. The red neurons represent the inputs to the network and the blue neurons denote the output layer. They are
connected by hidden layers with yellow neurons. Each hidden unit (neuron) is connected by affine linear maps between units in different layers and then with nonlinear (scalar) activation functions within units.}
\label{fig:1}
\end{figure}

\subsection{Neural network} 
\label{sec:NN}
Given an input vector $y \in Y$, a feedforward neural network (also termed as a multi-layer perceptron), shown in figure \ref{fig:1}, transforms it to an output through layers of units (neurons) consisting of either affine-linear maps between units (in successive layers) or scalar non-linear activation functions within units \cite{DLbook}, resulting in the representation,
\begin{equation}
\label{eq:ann1}
\map_{\theta}(y) = C_K \circ\sigma \circ C_{K-1}\ldots \ldots \ldots \circ\sigma \circ C_2 \circ \sigma \circ C_1(y).
\end{equation} 
Here, $\circ$ refers to the composition of functions and $\sigma$ is a scalar (non-linear) activation function. A large variety of activation functions have been considered in the machine learning literature \cite{DLbook}. Popular choices for the activation function $\sigma$ in \eqref{eq:ann1} include the sigmoid function, the $\tanh$ function and the \emph{ReLU} function defined by,
\begin{equation}
\label{eq:relu}
\sigma(z) = \max(z,0).
\end{equation}
When, $z \in \R^p$ for some $p > 1$, then the output of the ReLU function in \eqref{eq:relu} is evaluated componentwise. 

For any $1 \leq k \leq K$, we define
\begin{equation}
\label{eq:C}
C_k z_k = W_k z_k + b_k, \quad {\rm for} ~ W_k \in \R^{d_{k+1} \times d_k}, z_k \in \R^{d_k}, b_k \in \R^{d_{k+1}}.
\end{equation}
For consistency of notation, we set $d_1 = d$ and $d_{K+1} = 1$. 

Thus in the terminology of machine learning (see also figure \ref{fig:1}), our neural network \eqref{eq:ann1} consists of an input layer, an output layer and $(K-1)$ hidden layers for some $1 < K \in \N$. The $k$-th hidden layer (with $d_{k+1}$ neurons) is given an input vector $z_k \in \R^{d_k}$ and transforms it first by an affine linear map $C_k$ \eqref{eq:C} and then by a ReLU (or another) nonlinear (component wise) activation $\sigma$ \eqref{eq:relu}. A straightforward addition shows that our network contains $\left(d + 1 + \sum\limits_{k=2}^{K} d_k\right)$ neurons. 
We also denote, 
\begin{equation}
\label{eq:theta}
\theta = \{W_k, b_k\}, \theta_W = \{ W_k \}\quad \forall~ 1 \leq k \leq K,
\end{equation} 
to be the concatenated set of (tunable) weights for our network. It is straightforward to check that $\theta \in \Theta \subset \R^M$ with
\begin{equation}
\label{eq:ns}
M = \sum\limits_{k=1}^{K} (d_k +1) d_{k+1}.
\end{equation}

\subsection{Loss functions and optimization} 
For any $y \in \train$, one can readily compute the output of the neural network $\map_{\theta} (y)$ for any weight vector $\theta \in \Theta$. We define the so-called training \emph{loss function} as 
\begin{equation}
\label{eq:lf1}
J (\theta) : = \sum\limits_{y \in \train} |\map(y) - \map_{\theta} (y) |^p \bar{\mu}(y),
\end{equation}
for some $1 \leq p < \infty$ and with $\bar{\mu}$ being the density of the underlying probability distribution $\mu$.   

The goal of the training process in machine learning is to find the weight vector $\theta \in \Theta$, for which the loss function \eqref{eq:lf1} is minimized. 

It is common in machine learning \cite{DLbook} to regularize the minimization problem for the loss function, i.e. we seek to find
\begin{equation}
\label{eq:lf2}
\theta^{\ast} = {\rm arg}\min\limits_{\theta \in \Theta} \left(J(\theta) + \lambda \reg(\theta) \right).
\end{equation}  
Here, $\reg:\Theta \to \R$ is a \emph{regularization} (penalization) term. A popular choice is to set  $\reg(\theta) = \|\theta_W\|^q_q$ for either $q=1$ (to induce sparsity) or $q=2$. The parameter $0 \leq \lambda << 1$ balances the regularization term with the actual loss $J$ \eqref{eq:lf1}. 

The above minimization problem amounts to finding a minimum of a possibly non-convex function over a subset of $\R^M$ for possibly very large $M$. We follow standard practice in machine learning by either (approximately) solving \eqref{eq:lf2} with a full-batch gradient descent algorithm or variants of mini-batch stochastic gradient descent (SGD) algorithms such as ADAM \cite{ADAM}. 

For notational simplicity, we denote the (approximate, local) minimum weight vector in \eqref{eq:lf2} as $\theta^{\ast}$ and the underlying deep neural network $\map^{\ast}= \map_{\theta^{\ast}}$ will be our neural network surrogate for the underlying map $\map$. 

The proposed algorithm for computing this neural network is summarized below.
\begin{algorithm} 
\label{alg:DL} {\bf Deep learning of parameters to observable map} 
\begin{itemize}
\item [{\bf Inputs}:] Underlying map $\map$ \eqref{eq:map} and low-discrepancy sequences in $Y$. 
\item [{\bf Goal}:] Find neural network $\map_{\theta^{\ast}}$ for approximating the underlying map $\map$. 
\item [{\bf Step $1$}:] Choose the training set $\train = \{y_n\}$ for $y_n \in Y$, for all $1 \leq n \leq N$ such that the sequence $\{y_n\}$ is a low-discrepancy sequence in the sense of definition \ref{def:lds}. Evaluate $\map(y)$ for all $y \in \train$ by a suitable numerical method. 
\item [{\bf Step $2$}:] For an initial value of the weight vector $\overline{\theta} \in \Theta$, evaluate the neural network $\map_{\overline{\theta}}$ \eqref{eq:ann1}, the loss function \eqref{eq:lf2} and its gradients to initialize the
(stochastic) gradient descent algorithm.
\item [{\bf Step $3$}:] Run a stochastic gradient descent algorithm till an approximate local minimum $\theta^{\ast}$ of \eqref{eq:lf2} is reached. The map $\map^{\ast} = \map_{\theta^{\ast}}$ is the desired neural network approximating the map $\map$.
\end{itemize}
\end{algorithm}
\section{Analysis of the deep learning algorithm \ref{alg:DL}}
\label{sec:3}
In this section, we will rigorously analyze the deep learning algorithm \ref{alg:DL} and derive bounds on its accuracy. To do so, we need the following preliminary notions of variation of functions.
\subsection{On Variation of functions}
\label{sec:31}
For ascertaining the accuracy of the deep learning algorithm \ref{alg:DL}, we need some hypotheses on the underlying map $\map$. As is customary in the analysis of Quasi-Monte Carlo (QMC) methods for numerical integration of functions \cite{CAF1}, we will express the regularity of the underlying map $\map$ in terms of its \emph{variation}. As long as $d=1$, the unambiguous notion of variation is given by the total variation of the underlying function. However, there are multiple notions of variations for multi-variate functions. In this section, we very closely follow the presentation of \cite{owen} to consider variation in the sense of Vitali and Hardy-Krause. 

To this end, we consider scalar valued functions defined on a hyperrectangle 
$[\mathbf{a},\mathbf{b}] = \{x \in \mathbb{R}^d \hspace{0.1cm}|\hspace{0.1cm} a_i \leq x_i \leq b_i \hspace{0.2cm} \forall i = 1,\dots,d\}$ and define ladders $\mathcal{Y} = \prod_{j=1}^d \mathcal{Y}^j$ on $[\mathbf{a},\mathbf{b}]$, 
where $\mathcal{Y}^j$ for each $j = 1,\dots,d$ is a set of finitely many values in $(a_j,b_j)$. 
Each $y^j \in \mathcal{Y}^j$ has a successor $y_+^j$, which is the smallest element in $(y^j,\infty)\cap\mathcal{Y}^j$ 
if $(y^j,\infty)\cap\mathcal{Y}^j \neq \emptyset$, and $b_j$ otherwise. Consequently, we define $\mathbf{y}_+$ to be the successor of $\mathbf{y} \in \mathcal{Y}$, where $(\mathbf{y}_+)_j$ is the succesor of $(\mathbf{y})_j$ for all $j=1,\dots,d$.
Furthermore, we define $\mathbb{Y}$ to be the set of all ladders on $[\mathbf{a},\mathbf{b}]$. 

\begin{definition}
The variation of a function $f$ on the hyperrectangle $[\mathbf{a},\mathbf{b}]$ in the sense of Vitali is given as
\begin{align*}
V(f) := V_{[\mathbf{a},\mathbf{b}]}(f) = \sup_{\mathcal{Y} \in \mathbb{Y}} \sum_{\mathbf{y} \in \mathcal{Y}} \sum_{v \subset 1:d} (-1)^{|v|}f(\mathbf{y}^v:\mathbf{y}_+^{-v}),
\end{align*}
where $\mathbf{y}^v:\mathbf{y}_+^{-v}$ is the glued vector $\hat{\mathbf{y}} \in [\mathbf{a},\mathbf{b}]$ with 
\begin{align*}
\hat{y}_i = 
\begin{cases}
y_i & i \in v \\
(y_+)_i& i \notin v
\end{cases}.
\end{align*}
Hence, for $\mathbf{x} \in [\mathbf{a},\mathbf{b}]$ the object $\mathbf{x}^{-v}$ denotes the complement with respect to $1:d$ for all $v \subset 1:d$.
\end{definition}
The Hardy-Krause variation can now be defined based on the Vitali Variation.
\begin{definition}
The variation of a function $f$ on the hyperrectangle $[\mathbf{a},\mathbf{b}]$ in the sense of Hardy-Krause is given as
\begin{equation}
\label{eq:HKV}
V_{HK}(f) = \sum_{u \subsetneq 1:d} V_{[\mathbf{a}^{-u},\mathbf{b}^{-u}]} (f(\mathbf{x}^{-u};\mathbf{b}^u)),
\end{equation}
where $f(\mathbf{x}^{-u};\mathbf{b}^u)$ denotes the function $f(\mathbf{x}^{-u}:\mathbf{b}^u)$ with $\mathbf{x}^{-u}$ is the argument and $b^u$ is a parameter.
\end{definition}
Following this definition it is clear that $V(f) < \infty$ must hold in order for $V_{HK}(f)<\infty$ to hold. 
Additionally, $V_{HK}$ is a semi-norm, as it vanishes for constant functions. However, it can 
be extended to a full norm by adjoining the case $u=1:d$ to the sum in \eqref{eq:HKV}. 

For notational simplicity, we will fix the hyperrectangle $[\mathbf{a},\mathbf{b}]$ to be the unit cube $Y = [0,1]^d$ in $\R^d$.
\begin{remark}
Given a function $f$, it is clearly impractical to compute its Hardy-Krause variation in terms of formula \eqref{eq:HKV}. On the other hand, as long as the underlying function $f$ is smooth enough, we can use the following straightforward bound for the Hardy-Krause variation \cite{owen}:
\begin{equation}
    \label{eq:VHK}
    V_{HK}(f) \leq \hat{V}_{HK} = \int\limits_{Y} \left| \frac{\partial^d f(y)}{\partial y_1\partial y_2\cdots \partial y_d} \right | dy + \sum\limits_{i=1}^d \hat{V}_{HK}(f^{(i)}_1).
\end{equation}
Here, $f_1^{(i)}$ is the restriction of the function $f$ to the boundary $y_i = 1$. Since this restriction is a function in $d-1$ dimensions, the above formula \eqref{eq:VHK} is understood as a recursion with the Hardy-Krause variation of a univariate function being given by its total variation:
\begin{equation}
    \label{eq:VHK1d}
    \hat{V}_{HK}(f)= \int_0^1 \left| \frac{d f}{dy}(y) dy \right |.
    \end{equation}
Note that the inequality in \eqref{eq:VHK} is an identity as long as all the mixed partial derivatives that appear in \eqref{eq:VHK} are continuous \cite{CAF1}.

\end{remark}
\subsection{Estimates on the generalization error of algorithm \ref{alg:DL}}
\label{sec:32}
We fix $p=1$ in the loss function \eqref{eq:lf1}. Our aim in this section is to derive bounds on the so-called \emph{generalization error} \cite{MLbook} of the trained neural network $\map^{\ast}$ \eqref{eq:ann1} generated by the deep learning algorithm \ref{alg:DL}, which is customarily defined by,
\begin{equation}
    \label{eq:egen1}
    \er_G= \er_{G} (\theta^{\ast};\train) := \int\limits_{Y} |\map(y) - \map^{\ast}(y)| \bar{\mu}(y) dy.
\end{equation}
Note that this generalization error depends explicitly on the training set $\train$ and on the parameters $\theta^{\ast}$ of the trained neural network as $\map^{\ast}$ depends on them. However, we will suppress this dependence for notational convenience. Also, for notational convenience, we will assume that the underlying probability distribution $\mu$ is \emph{uniform}, i.e. $\bar{\mu} \equiv 1$. All the estimates presented below can be  readily but tediously extended to the case of $\bar{\mu} \in C^d(Y)$ and we omit these derivations here.  

Thus, we consider the generalization error of the form:
\begin{equation}
    \label{eq:egen}
    \er_G= \er_{G} (\theta^{\ast};\train) := \int\limits_{Y} |\map(y) - \map^{\ast}(y)| dy.
\end{equation}
For each training set $\train = \{y_n\}$ with $1 \leq n \leq N$, the training process in algorithm \ref{alg:DL} amounts to minimize the so-called \emph{training error}:
\begin{equation}
\label{eq:etrain}
\er_T = \er_{T} (\theta^{\ast};\train) : = \frac{1}{N}\sum\limits_{i=1}^N |\map(y_n) - \map^{\ast}(y_n) |.
\end{equation}
The training error can be calculated from the loss function \eqref{eq:lf1}, a posteriori after the training is concluded. Note that the training error explicitly depends on the training set $\train$ and we suppress this dependence. 

Given these definitions, we wish to estimate the generalization error in terms of the computable training error. This amounts to estimating the so-called \emph{generalization gap}, i.e. the difference between the generalization and training error. We do so in the lemma below.
\begin{lemma}
\label{lem:1}
Let $\map^{\ast}$ be a deep neural network of the form \eqref{eq:ann1}, generated by the deep learning algorithm \ref{alg:DL}, with the training set $\train = \{y_n\}$, for $y_n \in Y$ with $1 \leq n \leq N$ being a low-discrepancy sequence in the sense of \eqref{eq:lds}. Then the generalization gap is estimated by,
\begin{equation}
    \label{eq:gg1}
    |\er_G - \er_T| \leq C \frac{V_{HK}\left(|\map - \map^{\ast}| \right)(\log N)^d}{N},
\end{equation}
with $V_{HK}$ being the Hardy-Krause variation defined in \eqref{eq:HKV}. 
\end{lemma}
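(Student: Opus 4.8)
The plan is to recognize the generalization gap as exactly the quadrature error of the quasi-Monte Carlo rule associated to the training set $\train$, applied to the single fixed function $f := |\map - \map^{\ast}|$, and then to invoke the Koksma--Hlawka inequality together with the low-discrepancy bound \eqref{eq:lds}.

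First I would note that once algorithm \ref{alg:DL} has terminated, the trained network $\map^{\ast} = \map_{\theta^{\ast}}$ is a fixed (deterministic) function on $Y$, so $f = |\map - \map^{\ast}| : Y \to \R$ is a fixed function and the dependence of $\theta^{\ast}$ on $\train$ plays no role in the estimate. With $\bar{\mu} \equiv 1$, the definitions \eqref{eq:egen} of $\er_G$ and \eqref{eq:etrain} of $\er_T$ give immediately
\[
\er_G - \er_T \;=\; \int_Y f(y)\,dy \;-\; \frac{1}{N}\sum_{n=1}^N f(y_n),
\]
i.e. the integration error of the equal-weight quadrature with nodes $\{y_n\}_{1\le n\le N}$ applied to $f$. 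The second step is to apply the Koksma--Hlawka inequality (see \cite{CAF1,owen}): for any $f$ of bounded Hardy--Krause variation on $Y = [0,1]^d$ and any point set $\{y_n\} \subset Y$ with star-discrepancy $D_N^{\ast}$ as in \eqref{eq:dstar},
\[
\left|\frac{1}{N}\sum_{n=1}^N f(y_n) - \int_Y f(y)\,dy\right| \;\le\; V_{HK}(f)\, D_N^{\ast}.
\]
Applying this with $f = |\map-\map^{\ast}|$ yields $|\er_G - \er_T| \le V_{HK}(|\map-\map^{\ast}|)\, D_N^{\ast}$, and since $\train$ is a low-discrepancy sequence, substituting the bound \eqref{eq:lds}, $D_N^{\ast} \le C(\log N)^d/N$ with $C = C(d)$, gives precisely \eqref{eq:gg1}.

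There is essentially no genuine obstacle internal to the lemma: it is a direct restatement of Koksma--Hlawka combined with Definition \ref{def:lds}. The one point that must be stated carefully is that the right-hand side of \eqref{eq:gg1} is informative only when $V_{HK}(|\map-\map^{\ast}|) < \infty$; the substantive question — whether the ReLU (or other) network $\map^{\ast}$, and hence $|\map-\map^{\ast}|$, actually has finite Hardy--Krause variation, and how this constrains the admissible activation functions — is the real difficulty, but it belongs to the analysis that follows rather than to the proof of this lemma, where $V_{HK}(|\map-\map^{\ast}|)$ is simply carried along as a (possibly infinite) quantity. I would also remark that the extension to non-uniform $\bar{\mu} \in C^d(Y)$ proceeds by the same argument applied to $f\bar{\mu}$, using that products of $C^d$ functions of bounded Hardy--Krause variation again have bounded Hardy--Krause variation.
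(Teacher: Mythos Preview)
Your proposal is correct and follows exactly the same approach as the paper: recognize $\er_G - \er_T$ as the QMC quadrature error for the integrand $|\map - \map^{\ast}|$, apply the Koksma--Hlawka inequality to bound it by $V_{HK}(|\map-\map^{\ast}|)\,D_N^{\ast}$, and then substitute the low-discrepancy bound \eqref{eq:lds}. Your additional remarks (on the deterministic nature of $\map^{\ast}$ once training ends, on the finiteness caveat for $V_{HK}$, and on the extension to non-uniform $\bar\mu$) are accurate and go slightly beyond the paper's brief proof, but the core argument is identical.
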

By observing the definitions of the generalization error \eqref{eq:egen} and training error \eqref{eq:etrain}, we see that the training error $\er_T$ is exactly the Quasi-Monte Carlo approximation of the generalization error $\er_G$, with the underlying integrand being $|\map-\map^{\ast}|$. Therefore, we can apply the well-known Koksma-Hlawka inequality \cite{CAF1} to obtain,
\begin{align*}
    |\er_G - \er_T| &\leq V_{HK}\left(|\map-\map^{\ast}|\right) D^{\ast}_N \quad ({\rm by~Koksma-Hlawka~inequality}) \\
    &\leq C V_{HK}\left(|\map-\map^{\ast}|\right) \frac{(\log N)^d}{N}, \quad ({\rm by~} \eqref{eq:lds})
\end{align*}
which is the desired inequality \eqref{eq:gg1}. 

Thus, we need to assume that the Hardy-Krause variation of the underlying map $\map$ is bounded as well as show that the Hardy-Krause variation of the trained neural network is finite in order to bound the generalization error. We do so in the lemma below.
\begin{lemma}
\label{lem:2}
Assume that the underlying map $\map: Y \mapsto \R$ is such that 
\begin{equation}
    \label{eq:lem21}
    \hat{V}_{HK}(\map) < \infty.
\end{equation}
Assume that the neural network $\map^{\ast}$ is of the form \eqref{eq:ann1} with an activation function $\sigma: \R \mapsto \R$ with $\sigma \in C^d(\R)$. Then, for any given tolerance $\delta > 0$, the generalization error \eqref{eq:egen} with respect to the neural network $\map^{\ast}$, generated by the deep learning algorithm \ref{alg:DL}, can be estimated as,
\begin{equation}
    \label{eq:gg2}
    \er_G \leq \er_T + \frac{C (\log N)^d}{N} + 2\delta.
\end{equation}
Here, the constant $C$ depends on the tolerance $\delta$ but is independent of the number of training samples $N$.
\end{lemma}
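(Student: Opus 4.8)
The plan is to reduce the statement to Lemma~\ref{lem:1} (i.e. to the Koksma--Hlawka inequality together with the low-discrepancy bound \eqref{eq:lds}), the only genuinely new ingredient being a bound on the Hardy--Krause variation of the error function $\map-\map^{\ast}$. The obstruction to applying Lemma~\ref{lem:1} directly is that $z\mapsto|z|$ is not smooth, so the computable surrogate $\hat{V}_{HK}(|\map-\map^{\ast}|)$ from \eqref{eq:VHK} may be infinite (the mixed partials acquire singular contributions concentrated on the zero set of $\map-\map^{\ast}$). I would therefore first replace the absolute value by a smooth approximant, at the price of the two terms $\delta$ appearing in \eqref{eq:gg2}.

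\textbf{Step 1 (smoothing the absolute value).} Fix $\delta>0$ and pick $\phi_\delta\in C^\infty(\R)$ with $\sup_{z\in\R}\bigl|\,|z|-\phi_\delta(z)\,\bigr|\le\delta$ and with every derivative $\phi_\delta^{(k)}$ bounded on $\R$ (e.g. a rescaling of $\sqrt{z^2+\delta^2}$, whose $k$-th derivative is $O(\delta^{1-k})$). Writing $h:=\map-\map^{\ast}$, the pointwise bound $|\,|z|-\phi_\delta(z)\,|\le\delta$ gives immediately
\begin{equation*}
\Bigl|\,\er_G-\int_Y\phi_\delta(h(y))\,dy\,\Bigr|\le\delta,\qquad
\Bigl|\,\er_T-\tfrac1N\textstyle\sum_{n=1}^N\phi_\delta(h(y_n))\,\Bigr|\le\delta .
\end{equation*}
\textbf{Step 2 (Koksma--Hlawka for the smoothed integrand).} Since $\tfrac1N\sum_{n=1}^N\phi_\delta(h(y_n))$ is exactly the QMC quadrature of $\int_Y\phi_\delta(h)\,dy$, the Koksma--Hlawka inequality \cite{CAF1} and \eqref{eq:lds} yield
\begin{equation*}
\Bigl|\int_Y\phi_\delta(h)\,dy-\tfrac1N\textstyle\sum_{n=1}^N\phi_\delta(h(y_n))\Bigr|
\le V_{HK}\bigl(\phi_\delta\circ h\bigr)\,D^{\ast}_N
\le C_0\,V_{HK}\bigl(\phi_\delta\circ h\bigr)\,\frac{(\log N)^d}{N}.
\end{equation*}
Combining Steps 1 and 2 gives $\er_G\le\er_T+2\delta+C_0\,V_{HK}(\phi_\delta\circ h)\,(\log N)^d/N$, so it only remains to check $V_{HK}(\phi_\delta\circ h)<\infty$; this finite quantity then plays the role of the $N$-independent (but $\delta$-dependent) constant $C$ in \eqref{eq:gg2}.

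\textbf{Step 3 (finiteness of the variation of $\phi_\delta\circ h$).} The network $\map^{\ast}$ is a composition of the affine maps $C_k$ with the activation $\sigma\in C^d(\R)$, hence $\map^{\ast}\in C^d(Y)$; as $Y=[0,1]^d$ is compact, all its mixed partials up to order $d$ are bounded, so $\hat{V}_{HK}(\map^{\ast})<\infty$ by the recursion \eqref{eq:VHK}. With the hypothesis \eqref{eq:lem21} and subadditivity of the variation this controls $h$. Finally one applies the Fa\`a di Bruno / Leibniz formula to the top-order mixed derivative $\partial^d_{y_1\cdots y_d}(\phi_\delta\circ h)$ and uses the recursive structure of \eqref{eq:VHK} on each boundary face $y_i=1$: every resulting term is a product of a bounded factor $\phi_\delta^{(|\pi|)}(h)$ with mixed partials of $h$ of total order $d$, and these are integrable over $Y$ and over the faces by the previous remarks; hence $\hat{V}_{HK}(\phi_\delta\circ h)<\infty$ and a fortiori $V_{HK}(\phi_\delta\circ h)<\infty$.

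The main obstacle is precisely this last step: the chain-rule expansion of the $d$-th order mixed partial of a composition produces \emph{products} of several partial derivatives of $h$ of various orders, and one has to argue that each such product is still in $L^1(Y)$ (and, recursively, on each face), not merely that the single top-order mixed partial of $h$ is. Carrying out this bookkeeping — matching the orders of differentiation that fall on $\map$ against the integrability guaranteed by $\hat{V}_{HK}(\map)<\infty$, with the $C^d$-smoothness of $\map^{\ast}$ and the boundedness of the derivatives of $\phi_\delta$ absorbing the rest — is the only real work; once it is done, Steps 1--2 assemble the estimate \eqref{eq:gg2} routinely.
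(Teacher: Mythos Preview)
Your proposal is correct and follows essentially the same route as the paper: smooth the absolute value by a $C^d$ function within $\delta$ in sup-norm, apply Koksma--Hlawka plus \eqref{eq:lds} to the smoothed integrand, and then verify that the Hardy--Krause variation of $\phi_\delta\circ(\map-\map^{\ast})$ is finite. The only difference is in Step~3: where you propose to carry out the Fa\`a di Bruno expansion and the recursive bookkeeping of \eqref{eq:VHK} by hand (and rightly flag the integrability of the resulting products of partials as the crux), the paper simply invokes Theorem~4 of \cite{BO1} twice---once to conclude $\hat V_{HK}(\map^{\ast})<\infty$ from the $C^d$ structure of the network, and once (together with the hypothesis $\hat V_{HK}(\map)<\infty$) to conclude $V_{HK}(\rho_\delta(\map-\map^{\ast}))<\infty$ for the composition---thereby packaging exactly the computation you outline into a citation.
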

\begin{proof}
To prove \eqref{eq:gg2}, for any given $\delta > 0$, we use the following auxiliary function, 
\begin{equation}
    \label{eq:rdel}
    \rho_{\delta} \in C^{d}(\R \mapsto \R_{+}):~ \||u| - \rho_{\delta}(u)\|_{L^{\infty}(\R)} \leq \delta.
\end{equation}
The existence of such a function $\rho_\delta$ can easily be verified by mollifying the absolute value function in the neighborhood of the origin. 

We denote,
$$
 \er^{\delta}_G:= \int\limits_{Y} \rho_{\delta}\left(\map(y) - \map^{\ast}(y)\right) dy, \quad
 \er^{\delta}_T = \frac{1}{N}\sum\limits_{i=1}^N \rho_{\delta}\left(\map(y_n) - \map^{\ast}(y_n)\right). $$
 Then, it is easy to see that $\er^{\delta}_T$ is the QMC approximation of $\er^{\delta}_G$ and we can apply the Koksma-Hlawska inequality as in the proof of lemma \ref{lem:1} to obtain that,
 \begin{equation}
     \label{eq:lem21}
     |\er^{\delta}_G - \er^{\delta}_T| \leq C \frac{V_{HK}\left(\rho_{\delta}\left(\map - \map^{\ast} \right)\right)(\log N)^d}{N}.
 \end{equation}
From the assumption that the activation function $\sigma \in C^d(\R)$ and structure of the artificial neural network \eqref{eq:ann1}, we see that for any $\theta \in \Theta$, the neural network $\map^{\ast}$ is a composition of affine (hence $C^d$) and sufficiently smooth $C^d$ functions, with each function in the composition being defined on compact subsets of $\R^{\bar{d}}$, for some $\bar{d}\geq 1$. Thus, for this composition of functions, we can directly apply Theorem 4 of \cite{BO1} to conclude that 
$$
V_{HK}(\map^{\ast}) \equiv \hat{V}_{HK}(\map^{\ast}) < \infty.
$$
Moreover, we have assumed that $\hat{V}_{HK}(\map) < \infty$. It is straightforward to check using the definition \eqref{eq:VHK} that 
\begin{equation}
    \label{eq:lem22}
    \hat{V}_{HK}(\map- \map^{\ast}) < \infty.
\end{equation}
Next, we observe that $\rho_{\delta} \in C^d(\R)$ by our assumption \eqref{eq:rdel}. Combining this with \eqref{eq:lem22}, we use Theorem 4 of \cite{BO1} (with an underlying multiplicative Faa di Bruno formula for compositions of multi-variate functions) to conclude that 
\begin{equation}
    \label{eq:lem23}
    V_{HK}\left(\rho_{\delta}\left(\map- \map^{\ast}\right)\right) \leq C < \infty.
\end{equation}
Here, the constant $C$ depends on the dimension and the tolerance $\delta$. 

Applying \eqref{eq:lem23} in \eqref{eq:lem21} and identifying all the constants as $C$ yields,
\begin{equation}
     \label{eq:lem24}
     |\er^{\delta}_G - \er^{\delta}_T| \leq C(d,\delta) \frac{(\log N)^d}{N}.
 \end{equation}
 It is straightforward to conclude from assumption \eqref{eq:rdel} that 
 \begin{equation}
     \label{eq:lem25}
     \max\left \{|\er_G - \er_G^{\delta}|, |\er_T - \er_T^{\delta}| \right \} \leq \delta.
 \end{equation}
Combining \eqref{eq:lem24} with \eqref{eq:lem25} and a straightforward application of the triangle inequality leads to the desired inequality \eqref{eq:gg2} on the generalization error. 
\end{proof}
Several remarks about Lemma \ref{lem:2} are in order.
\begin{remark}
The estimate \eqref{eq:gg2} requires a certain regularity on the underlying map $\map$, namely that $\hat{V}_{HK}(\map) < \infty$. This necessarily holds if the underlying map $\map \in C^d$. However, this requirement is significantly weaker than assuming that the underlying map is $d$-times continuously differentiable. Note that the condition of the Hardy-Krause variation is only on the mixed-partial derivatives and does not require any boundedness on other partial derivatives. Moreover, in many applications, for instance when the underlying parametric PDE \eqref{eq:ppde} is elliptic or parabolic and the functions $\psi,g$ in \eqref{eq:obsp} are smooth, this regularity is always observed. 
\end{remark}
\begin{remark}
It is instructive at this stage to compare the deep learning algorithm \ref{alg:DL} based on a low-discrepancy sequence training set with one based on randomly chosen training sets. As mentioned in the introduction, the best-case bound on the generalization error for randomly chosen training sets is given by \eqref{eq:genE1} with bound,
\begin{equation}
    \label{eq:genE2}
    \er_G \sim \er_T + \frac{{\rm std}(\map)}{\sqrt{N}},
    \end{equation}
with the above inequality holding in the root mean square sense (see estimate (2.22) and the discussion around it in the recent article \cite{LMM1}). Comparing such an estimate to \eqref{eq:gg2} leads to the following observations:
\begin{itemize}
    \item First, we see from \eqref{eq:gg2}, that as long as the number of training samples $N \geq 2^d$, the generalization error for the deep learning algorithm with a low-discrepancy sequence training set decays at much faster (linear) rate than the corresponding $1/\sqrt{N}$ decay for the deep learning algorithm with a randomly chosen training set. 
    \item As is well known \cite{MLbook,EMW1,AR1}, the bound \eqref{eq:genE2} on the generalization error needs not necessarily hold due to a high degree of correlations in the trained neural network when evaluated on training samples. Consequently, the bound \eqref{eq:genE2}, and the consequent rate of convergence of the deep learning algorithm with randomly chosen training set with respect to number of training samples can be significantly worse. No such issue arises for the bound \eqref{eq:gg2} on the generalization error for the deep learning algorithm \ref{alg:DL} with a low-discrepancy sequence training set, as this bound is based on the Koksma-Hlawka inequality and automatically takes correlations into account.
\end{itemize}
Given the above considerations, we can expect that at least till the dimension of the underlying map is moderately high and if the map is sufficiently regular, the deep learning algorithm \ref{alg:DL} with a low-discrepancy sequence training set will be \emph{significantly more accurate} than the standard deep learning algorithm with a randomly chosen training set. On the other hand, either for maps with low-regularity or for problems in very high dimensions, the theory developed here suggests no significant advantage using a low-discrepancy sequence as the training set over randomly chosen points, in the context of deep learning.  
\end{remark}
\begin{remark}
In principle, the upper bound \eqref{eq:gg2} on the generalization error is computable as long as good estimates on $\hat{V}_{HK}(\map)$ are available. In particular, the training error $\er_T$ \eqref{eq:etrain} is computed during the training process. The constants $C = C(d,\delta)$ in \eqref{eq:gg2} can be explicitly computed from the generalized Faa di Bruno formula (eqn [10] in reference \cite{BO1}) and given the explicit form \eqref{eq:ann1} of the neural network, we can compute $\hat{V}_{HK}(\map^{\ast})$ explicitly by computing the mixed partial derivatives. However, it is well known that the estimates on QMC integration error in terms of the Koksma-Hlawka inequality are very large overestimates \cite{CAF1}. Given this, we suspect that bounds of the form \eqref{eq:gg2} will be severe overestimates. On the other hand, the role of the bound \eqref{eq:gg2} is to demonstrate the linear decay of the error, with respect to the number of training samples, at least up to moderately high dimensions. 
\end{remark}
\begin{remark}
Finally, we investigate the issue of activation functions in this context. In deriving the bound \eqref{eq:gg2}, we assume that the activation function $\sigma$ in the neural network \eqref{eq:ann1} is sufficiently smooth, i.e. $\sigma \in C^d(\R)$. Thus, standard choices of activation functions such as sigmoid and $\tanh$ are admissible under this assumption. 

On the other hand, the ReLU function \eqref{eq:relu} is widely used in deep learning. Can we use ReLU also as an activation function in the context of the deep learning algorithm \ref{alg:DL}, with low-discrepancy sequences as training sets? A priori, the generalization error bound \eqref{eq:gg1} only requires that $|\map -\map^{\ast}|$ has bounded Hardy-Krause variation \eqref{eq:HKV}. Note that this expression does not require the existence of mixed partial derivatives, up to order $d$, of the neural network $\map^{\ast}$ and hence, of the underlying activation function. Thus, ReLU might very well be an admissible activation function in this context. 

However, in \cite{owen}, the author considers the following function for any non-negative integers $d,r$:
\begin{equation}
    \label{eq:owenf}
    f_{d,r}(x)= \left(\max \left\{\sum_{i=1}^d x_i - \frac{1}{2},0\right\}\right)^r,
\end{equation}
and proves using rather elementary arguments with a ladder $\mathcal{Y}_i=\{0,1/(2m),\dots,(m-1)/(2m)\}$, that as long as $d \geq r+2$, the function $f_{d,r}$ \eqref{eq:owenf} has infinite variation in the sense of Vitali, and hence infinite Hardy-Krause variation \eqref{eq:HKV}. 

This argument can be extended in the following: for $d \geq 3$, consider the following simple 2-layer ReLU network $\map^{\ast}_{d}: Y \subset \R^d \mapsto \R_+$:
\begin{equation}
    \label{eq:owenf1}
     \map^{\ast}_{d}(y)= \max \left\{\sum_{i=1}^d y_i - \frac{1}{2},0\right\}.
\end{equation}
Note that \eqref{eq:owenf1} is a neural network of the form \eqref{eq:ann1} with a single hidden layer, ReLU activation function, all weights being $1$ and bias is $-0.5$. We conclude from \cite{owen} (proposition 16) that the Hardy-Krause variation of the ReLU neural network \eqref{eq:owenf1} is infinite. Thus, using ReLU activation functions in \eqref{eq:ann1} might lead to a blow-up of the upper bound in the generalization error \eqref{eq:gg1} and a reduced rate of convergence of the error with respect to the number of training samples. 

However, it is worth pointing out that the bound \eqref{eq:gg1} can be a gross over-estimate and in practice, we might well have much better generalization error with ReLU activation functions than what \eqref{eq:gg1} suggests. To test this proposition, we consider the following simple numerical example. We let $Y = [0,1]^3$, i.e. $Y$ is a $3$-dimensional cube. The underlying map is given by $\map(y) = f_{3,4}(y)$, with $f_{3,4}$ defined from \eqref{eq:owenf}. Clearly, for this underlying map $\hat{V}_{HK}(\map) < \infty$. Thus, using a sigmoid activation function in \eqref{eq:ann1} and applying the generalization error estimate \eqref{eq:gg2} suggests a linear decay of the generalization error. On the other hand, \eqref{eq:gg2} does not directly apply if the underlying activation function is ReLU \eqref{eq:relu}. We approximate this underlying map $\map(y) = f_{3,4}(y)$ with the following three algorithms: 
\begin{itemize}
    \item The deep learning algorithm \ref{alg:DL} with Sobol sequences as the training set and sigmoid activation function. This configuration is referred to as $DL_{sob}$.
    \item The deep learning algorithm \ref{alg:DL} with Sobol sequences as the training set and ReLU activation function. This configuration is referred to as ReLU-$DL_{sob}$.
    \item The deep learning algorithm \ref{alg:DL} with Random points as the training set and sigmoid activation function. This configuration is referred to as $DL_{rand}$.
\end{itemize}
For each configuration, we train deep neural networks with architectures and hyperparameters specified in table \ref{tab:ensemble_params} with exactly $1$ hidden layer. We train $100$ of these one-hidden layer networks and present the average (over the ensemble) generalization error (approximated on a test set with $8192$ sobol points) for each of the above 3 configurations, for different numbers of training points, and present the results in figure \ref{fig:failed_relu}. As seen from the figure, the generalization error with the $DL_{sob}$ algorithm is significantly smaller than the other 2 configurations and decays superlinearly with respect to the number of training samples. On the other hand, there is no significant difference between the ReLU-$DL_{sob}$ and $DL_{rand}$ algorithms for this example. The generalization error decays much more slowly than for the $DL_{sob}$ algorithm. Thus, this example illustrates that it might not be advisable to use ReLU as an activation function in the deep learning algorithm \ref{alg:DL} with low-discrepancy sequences. Smooth activiation functions, such as sigmoid and $\tanh$ should be used instead. 
\end{remark}

\section{Numerical experiments}
\label{sec:4}
\subsection{Details of implementation}
We have implemented the deep learning algorithm \ref{alg:DL} in two versions:  the $DL_{sob}$ algorithm, with a Sobol sequence \cite{sobol} as the training set and the $DL_{rand}$ algorithm with a randomly chosen training set. Both versions are based on networks with sigmoid activation functions. The test set for each of the experiments below has a size of $8192$ sampling points, which sufficiently outnumbers the number of training samples, and is based on the Sobol sequence when testing the $DL_{sob}$ algorithm and are uniformly random drawn when testing the $DL_{rand}$ algorithm.

There are several hyperparameters to be chosen in order to specify the neural network \eqref{eq:ann1}. We will perform an ensemble training, on the lines proposed in \cite{LMR1}, to ascertain the best hyperparameters. The ensemble of hyperparameters is listed in table \ref{tab:ensemble_params}. The weights of the networks are initialized according to the Xavier initialization \cite{xavier}, which is standard for training networks with sigmoid (or tanh) activation functions. We also note that we choose $p=2$ in the loss function \eqref{eq:lf1} and add the widely-used weight decay regularization to our training procedure, which is based on an $L^2$-penalization of the weights in \eqref{eq:lf2}. Although the theory has been developed in the $L^1$-setting for low-discrepancy sequences, we did not find any significant difference between the results for $p=1$ or $p=2$ in the training loss function. 

If not stated otherwise, for all experiments, we present the generalization error (by approximating it with an error on a test set) for the best performing network, selected in the ensemble training, for the $DL_{sob}$ algorithm. On the other hand, as Monte Carlo methods are inherently statistical, the average over $100$ retrainings (random initializations of weights and biases) of the test error will be presented for the $DL_{rand}$-algorithm. 

The training is performed with the well-known ADAM optimizer \cite{ADAM} in full-batch mode for a maximum of $20000$ epochs (or training steps).

The experiments are implemented in the 
programming language Python using the open source machine learning framework PyTorch \cite{pytorch} to represent and train the neural network models. The scripts to perform the ensemble training as well as the retraining procedure together with all data sets used in the experiments can be downloaded from \href{https://github.com/tk-rusch/Deep_QMC_sampling}{\textit{https://github.com/tk-rusch/Deep\_QMC\_sampling}}.
Next, we present a set of numerical experiments that are selected to represent different areas in scientific computing. 
\begin{table}[htbp]
  \caption{Hyperparameters for the ensemble training}
  \label{tab:ensemble_params}
  \centering
  \begin{tabular}{ll}
    \toprule
    \cmidrule(r){1-2}
    Hyperparameter     & values used for the ensemble \\
    \midrule
learning rate & $10^{-1},10^{-2},10^{-3}$\\
weight decay &  $10^{-4},10^{-5},10^{-6},10^{-7} $\\
depth & $2^{2},2^{3},2^4$\\
width & $3\times2^1,3\times2^2,3\times2^3$\\
    \bottomrule
  \end{tabular}
\end{table}
\subsection{Function approximation: sum of sines}
We start with an experiment that was proposed in \cite{LMR1} to test the abilities of neural networks to approximate nonlinear maps in moderately high dimension. The map to be approximated is the following sum of sines:
\begin{align}
\label{eq:sum_sines}
\map(x) = \sum_{i=1}^6 \sin(4\pi x_i),
\end{align}  
where $x \in [0,1]^6$. Clearly $\map$ is infinitely differentiable and satisfies the assumptions of Lemma \ref{lem:2}. On the other hand, given that we are in $6$ dimensions and that the derivatives of $\map$ are large, it is quite challenging to approximate this map by neural networks \cite{LMR1}. 

In fact the experiments in \cite{LMR1} show that the standard $DL_{rand}$ algorithm produces large errors even for a very large number of training samples due to a high training error. 

Motivated by this, we modify the deep learning algorithm slightly by introducing the well-known practice of \emph{batch normalization} \cite{batchnorm} to each hidden layer of the neural network. Using this modification of our network architectures, the training errors with the resulting algorithm, plotted in figure \ref{fig:scaled_sine_mean}, are now sufficiently small. Moreover, the generalization error for the $DL_{rand}$ algorithm decays at a rate of $0.63$ with respect to the number of training samples. This is slightly better than the predicted rate of $0.5$.

\begin{figure}[h!]
\begin{minipage}[t]{.49\textwidth}
\includegraphics[width=1.\textwidth]{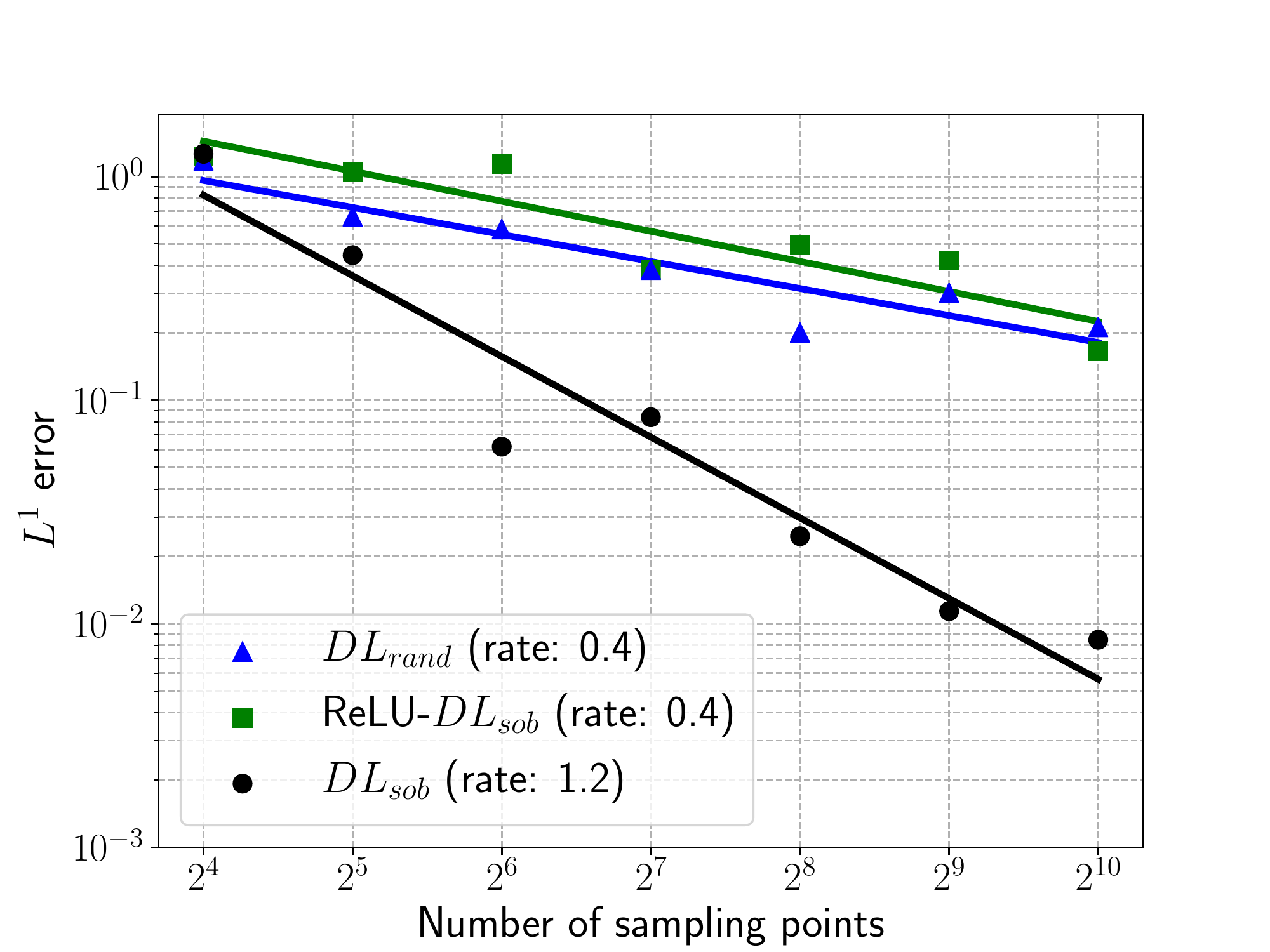}
\caption{$L^1$ generalization errors for approximating $f_{3,4}$ \eqref{eq:owenf} using $DL_{sob}$ based on ReLU and sigmoid activation functions compared to the generalization error when using $DL_{rand}$.}
\label{fig:failed_relu}
\end{minipage}%
\hspace{0.01\textwidth}
\begin{minipage}[t]{0.49\textwidth}	
\includegraphics[width=1.\textwidth]{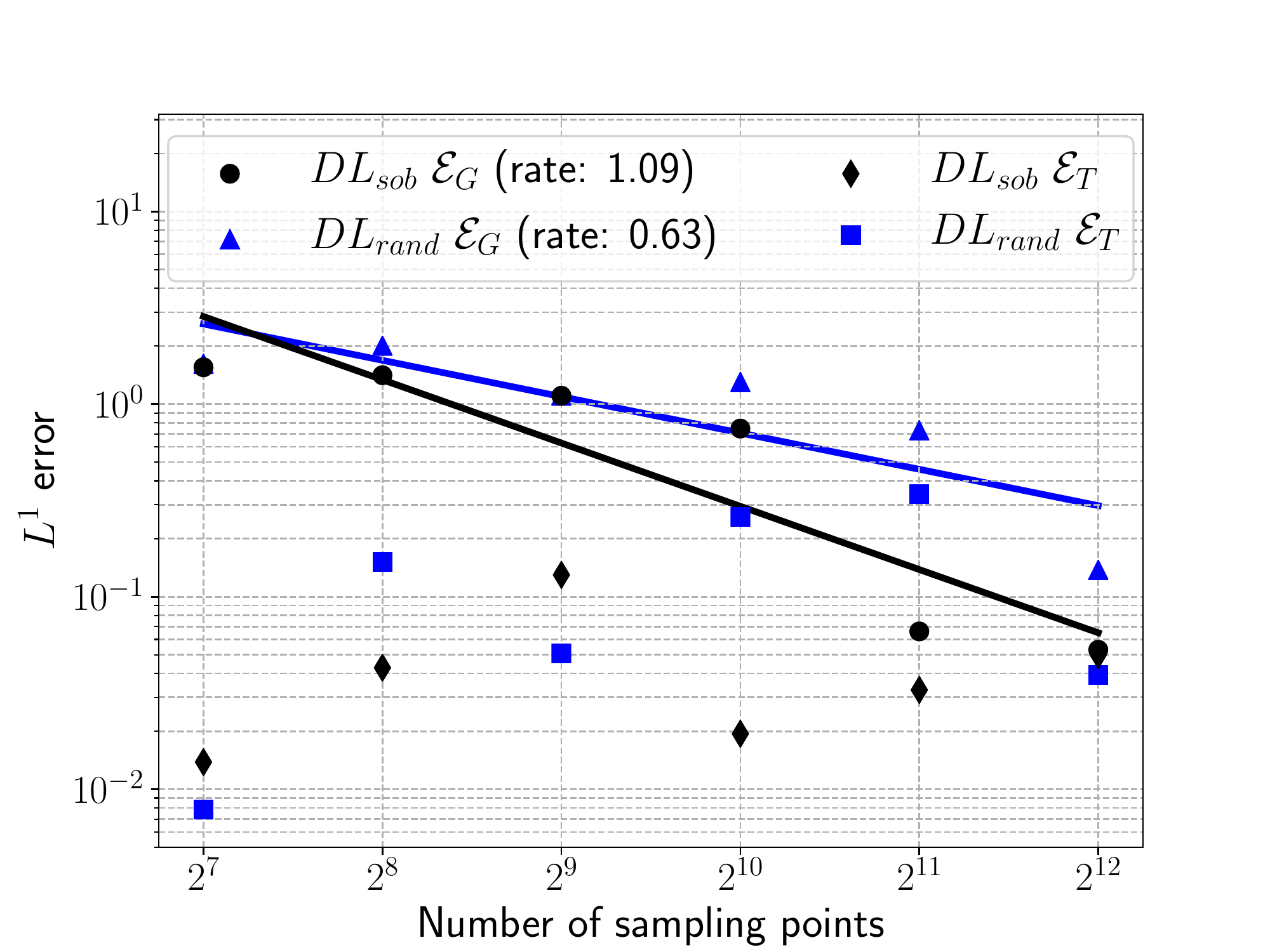}
\caption{$L^1$ generalization and training errors for approximating the sum of sines \eqref{eq:sum_sines} underlying mapping using $DL_{sob}$ as well as $DL_{rand}$.}
\label{fig:scaled_sine_mean}
\end{minipage}
\end{figure}

On the other hand, the $DL_{sob}$ algorithm, based on low-discrepancy Sobol points does much better. Indeed, the training error with $DL_{sob}$ is lower than the training error with the $DL_{rand}$ algorithm with most training sets, but not in all of them. However, the generalization gap (and hence the generalization error) of $DL_{sob}$ is significantly smaller than that of $DL_{rand}$ and decays at the rate of $1.1$, which is slightly better than the expected rate of $1$. This experiment clearly validates the theory developed earlier in the paper. Indeed comparing \eqref{eq:genE2} with \eqref{eq:gg2} shows that even if the differences in training errors might be minor (we have no control on the size of this error in \eqref{eq:gg2}), the generalization gap with the $DL_{sob}$ algorithm should be significantly smaller than the $DL_{rand}$ algorithm. This is indeed observed in the numerical results presented in figure \ref{fig:scaled_sine_mean}. 
\subsection{UQ for ODEs: projectile motion.}
\label{sec:42}
This problem is from \cite{LMM1} where it was introduced as a prototype for learning observables for ODEs and dynamical systems. The motion of a projectile, subjected to gravity and air drag, is described by the following system of ODEs,
\begin{alignat}{2}
\label{eq:projectile_system}
\frac{d}{dt} \mathbf{x}(t;y) &= \mathbf{v}(t;y), && \mathbf{x}(0;y)=\mathbf{x}_0(y), \\
\frac{d}{dt} \mathbf{v}(t;y) &= -F_D(\mathbf{v}(t;y);y)\mathbf{e}_1 - g\mathbf{e}_2, \qquad && \mathbf{v}(0;y) = \mathbf{v}_0(y),
\end{alignat}
where $F_D = \frac{1}{2m}\rho C_d \pi r^2 \|v\|^2$ denotes the drag force. 
Additionally, the initial conditions are set to $\mathbf{x}(0;y)= [0,h]$ and 
$\mathbf{v}(0;y) =[v_0 \cos(\alpha), v_0 \sin(\alpha)]$. 

In the context of uncertainty quantification (UQ) \cite{UQbook}, one models uncertainty in the system on account of measurement errors, in a statistical manner, leading to a parametric model of the form,
\begin{align*}
\rho(y) &= 1.225(1+ \epsilon G_1(y)),\hspace{0.1cm} r(y) = 0.23(1+ \epsilon G_2(y)),\hspace{0.1cm} C_D(y) = 0.1(1 + \epsilon G_3(y)), \\
m(y) &= 0.145(1 + \epsilon G_4(y)),\hspace{0.1cm} h(y) = (1 + \epsilon G_5(y)), \hspace{0.1cm} \alpha(y) = 30(1 + \epsilon G_6(y)), \hspace{0.1cm} v_0(y) = 25(1 + \epsilon G_7(y)),
\end{align*}
where $y \in Y=[0,1]^7$ is the parameter space, with an underlying uniform distribution. Additionally, $G_k(y) = 2y_k-1$ for all $k=1,\dots,7$ and $\epsilon = 0.1$.

We choose the \emph{horizontal range} $x_{\max}$ to be the observable of the simulation:
\begin{equation*}
    x_{\max}(y) = x_1(y,t_f), \qquad \text{with } t_f = x_2^{-1}(0).
\end{equation*}
The objective is to predict and approximate the map $\map = x_{max}$ with neural networks. To this end, we generate training and test data with a forward Euler discretization of the system 
\eqref{eq:projectile_system}. The whole solver can be downloaded from 
\href{https://github.com/mroberto166/MultilevelMachineLearning}{\textit{https://github.com/mroberto166/MultilevelMachineLearning}}.
\begin{figure}[h!]
\begin{minipage}[t]{0.49\textwidth}	
\includegraphics[width=1.\textwidth]{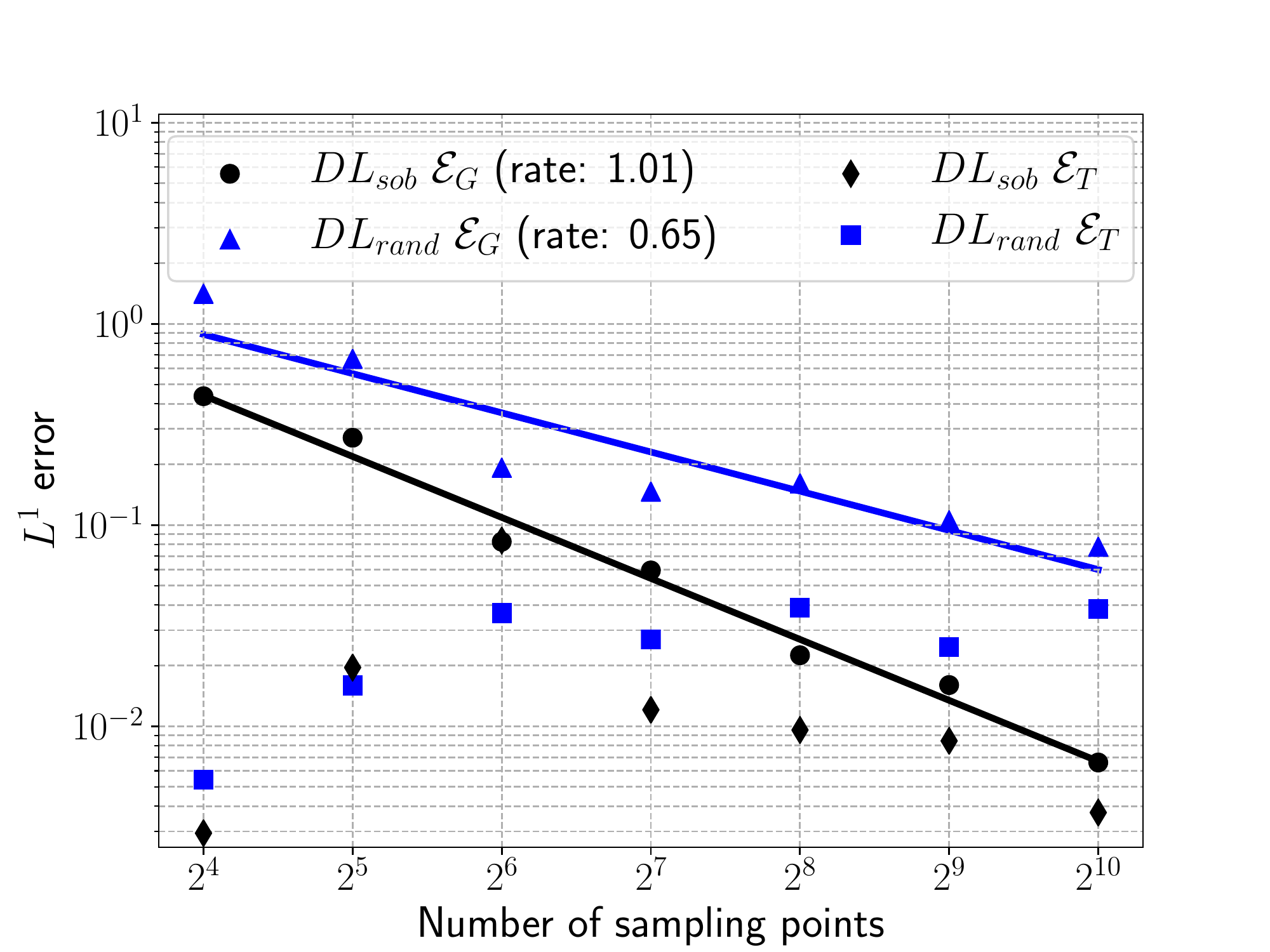}
\caption{$L^1$ generalization and training errors for approximating the horizontal range $x_{max}$ corresponding to the ODE system \eqref{eq:projectile_system} using $DL_{sob}$ as well as $DL_{rand}$. For the $DL_{sob}$ the best result of the ensemble is reported while for the $DL_{rand}$ the retrained average of the best ensemble result is shown.}
\label{fig:projectile_motion}
\end{minipage}%
\hspace{0.01\textwidth}
\begin{minipage}[t]{0.49\textwidth}	
\includegraphics[width=1.\textwidth]{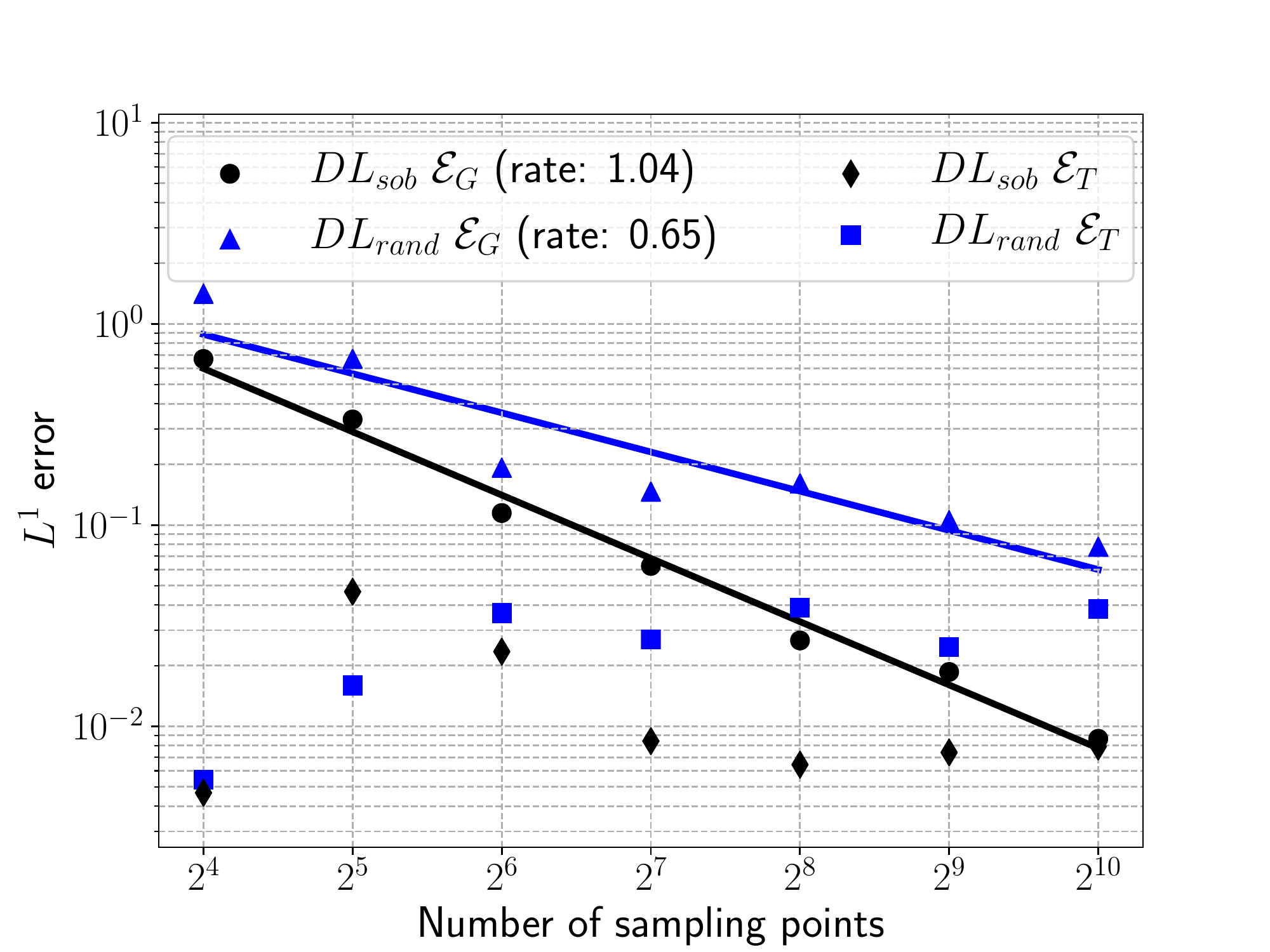}
\caption{$L^1$ generalization and training errors for approximating the horizontal range $x_{max}$ corresponding to the ODE system \eqref{eq:projectile_system} using $DL_{sob}$ as well as $DL_{rand}$. For both the $DL_{sob}$ as well as the $DL_{rand}$ the retrained average of the best ensemble result is reported.}
\label{fig:projectile_motion_mean}
\end{minipage}
\end{figure}
In the 
following experiment, the data sets are generated with a step size of  
$\Delta t = 0.00125$.
In figure \ref{fig:projectile_motion}, we present the results for the training error and the generalization error with both the $DL_{sob}$ as well as $DL_{rand}$ algorithms, with varying size of the training set. From this figure, we see that the training errors are smaller for the $DL_{sob}$ than for $DL_{rand}$. This is not explained by the bounds \eqref{eq:genE2} and \eqref{eq:gg2}, as we do not estimate the training error in any particular way. On the other hand, the generalization gap (and the generalization error) for $DL_{sob}$ is considerably smaller than the generalization error for $DL_{rand}$ and decays at approximately the expected rate of $1$, with respect to the number of training samples. This is in contrast to the generalization error for the $DL_{rand}$ which decays with a rate of $0.65$.

We remind the reader that the $DL_{sob}$ errors are for the best performing network selected in the ensemble training, whereas the $DL_{rand}$ errors are averages over the multiple retrainings. Could this difference in evaluating errors, which is completely justified by the theory presented here, play a role in how the errors behave? To test this, in figure \ref{fig:projectile_motion_mean}, we compare the training and generalization errors for $DL_{sob}$ and $DL_{rand}$, with both being averaged over $100$ retrainings. There is a slight difference in the average training error for $DL_{sob}$, when compared to the ones shown in figure \ref{fig:projectile_motion}. In particular, the average training error does not decay as neatly as for a single network. On the other hand, the generalization error decays at rate $1$ and is significantly smaller than the generalization error for $DL_{rand}$
\subsection{Computational Finance: pricing for a European Call Option} 
In this experiment, we want to compute the "fair" price of a \emph{European multi-asset basket call option}. We assume that there is a basket of underlying assets $S_1,\dots,S_d$, which change in time according to 
the multivariate geometric brownian motion (GBM), i.e.
\begin{align*}
dS_i &= rS_i(t)dt + \sigma_iS_i(t) dW_i(t), \\
S_i(0) &= (S_0)_i,
\end{align*}
for all $i=1,\dots,d$,
where $r$ is called the risk-free interest rate, $\sigma_i$ is the volatility and the Wiener processes $W_i$ are correlated 
such that $d\langle W_i,W_j\rangle(t) = \rho_{ij}dt$ for all $i,j=1,\dots,d$. 
Additionally, the payoff function associated to our basket option is given as 
\begin{align*}
\Lambda(\mathbf{S}) = \Lambda(S_1,\dots,S_d).
\end{align*}
The pricing function $V(t,\mathbf{S})$ is then the discounted conditional 
expectation of its payoff under an equivalent martingale measure $\tilde{P}$:
\begin{align*}
V(t,\mathbf{S}) = e^{-r(T-t)}\mathbb{E}_{\tilde{P}}[\Lambda(\mathbf{S}(T)) \hspace{0.1cm}|\hspace{0.1cm} \mathbf{S}(t) = \mathbf{S}].
\end{align*}
However, using the Feynman-Kac Formula, one can show that the payoff satisfies a multidimensional Black-Scholes partial differential equation (BSPDE) (\cite{bs_theory}) of the form: 
\begin{align}
\label{eq:bspde}
\frac{\partial V}{\partial t} + \sum_{i=1}^d r S_i \frac{\partial V}{\partial S_i} + \frac{1}{2}\sum_{i=1}^d \sigma_i^2 S_i^2\frac{\partial^2 V}{\partial S_i^2}
+\sum_{i=1}^{d-1} \sum_{j=i+1}^d \rho_{ij}\sigma_i \sigma_j S_i S_j \frac{\partial^2 V}{\partial S_i \partial S_j} -rV =0,
\end{align}
subject to the terminal condition $V(T,\mathbf{S}) = \Lambda(\mathbf{S})$. \\
Even though a general closed-form solution does not exist for the multi-dimensional 
BSPDEs, using a specific payoff function enables computing the solution in terms of an explicit formula. Therefore, we will consider the payoff function corresponding to the European geometric average basket call option:
\begin{align}
\label{eq:geom:payoff}
\Lambda(\mathbf{S}(T)) = \max \{ \left(\prod_{i=1}^d S_i(T)\right)^{1/d}-K,0\}.
\end{align}
Using the fact that products of log-normal random variables are log-normal, 
one can derive the following for the initial value of the pricing function, 
i.e. for the discounted expectation at time $t=0$:
\begin{align}
\label{eq:bs_initial}
V(0,\mathbf{S})=e^{-rT}\mathbb{E}_{\tilde{P}}[\Lambda(\mathbf{S}(T)) \hspace{0.1cm}|\hspace{0.1cm} \mathbf{S}(0) = \mathbf{S}].
\end{align}
This is of particular interest, as it denotes the value of the option at the time 
when it is bought. This provides the fair price for the European geometric average 
basket Call option in the Black-Scholes model. Using the particular payoff function 
\eqref{eq:geom:payoff}, the closed-form solution to the multidimensional BSPDE 
\eqref{eq:bspde} at point $(0,\mathbf{S})$ is then given as (Theorem 2 in \cite{cfs_bspde}):
\begin{equation}
    \label{eq:BSES}
V(0,\mathbf{S}) = e^{-rT}(\tilde{s}e^{\tilde{m}}\Phi({d}_1) - K\Phi({d}_2)),
\end{equation}
where $\Phi$ is the standard normal cumulative distribution function. Additionally
\begin{align*}
&\nu = \frac{1}{d} \sqrt{\sum_{j=1}^d \left(\sum_{i=1}^d \sigma_{ij}^2\right)^2}, \qquad m = rT - \frac{1}{2d}\sum_{i=1}^d\sum_{j=1}^d \sigma_{ij}^2T, \qquad \tilde{m} = m + \frac{1}{2}\nu^2,\\
&\tilde{s} = \left(\prod_{i=1}^d S_i\right)^{1/d}, \qquad d_1 = \frac{\log(\frac{\tilde{s}}{K})+m+\nu^2}{\nu}, \qquad d_2 = d_1 - \nu,
\end{align*}
where the entries $\sigma_{ij}$ of $\sigma \in \mathbb{R}^{d \times d}$ are the covariances 
(and variances) of the stock returns. For the subsequent experiments, we fix 
the following parameters: $\sigma = 10^{-5}\mathbb{1}$, $T=5$, $K=0.08$ and $r=0.05$,
where $\mathbb{1} \in \mathbb{R}^{d \times d}$ is the identity matrix. 

The underlying asset prices $S_i$, at time $t=0$ are drawn from $[0,1]^d$. The underlying map for this simulation is given by  
the fair price $\map({\bf S}) =  V(0,\mathbf{S})$ of the basket option at time $t=0$. We generate the training data for both the $DL_{sob}$ and the $DL_{rand}$ algorithms by computing $\map({\bf S})$ for either Sobol points or random points in $Y$, i.e. by evaluating the explicit formula \eqref{eq:BSES} for each point in the training set. We note that this formula is an explicit solution of the underlying Black-Scholes PDE \eqref{eq:bspde}. In case no formula is available, for instance with a different payoff function, either a finite difference or finite element method approximating the underlying PDE (in low dimensions) or a Multi-Carlo approximation of the integral \eqref{eq:bs_initial} can be used to generate the training data. 

In figure \ref{fig:multi_dim_BS}, we plot the generalization error for the $DL_{sob}$ with respect to the number of training points, till $2^{10}$ training points, for three different values of the underlying dimension $d$ (number of assets in the basket). For concreteness, we consider $d=5,7,9$ and see that the rate of decay of the generalization error does depend on the underlying dimension and ranges from $0.9$ (for $d=5$) to approximately $0.5$ for $d=9$. This is completely consistent with the bound \eqref{eq:gg2} on the generalization gap as the effect of the logarithmic correction is stronger for higher dimensions. In fact, the logarithmic term is expected to dominate till at least $N \approx 2^d$ training points. Once we consider even more training points, the logarithmic term is weaker and a higher rate of decay can be expected, as predicted by \eqref{eq:gg2}. To test this, we plot the generalization error till $2^{12}$ training points for the $9$-dimensional case ($d=9$ in \eqref{eq:bspde}), in figure \ref{fig:BS_QMC_vs_MC}. For the sake of comparison, we also plot the generalization errors with the $DL_{rand}$ algorithm. From figure \ref{fig:BS_QMC_vs_MC}, we observe that indeed, the rate of decay of generalization errors for the $DL_{sob}$ algorithm does increase to approximately $0.7$, when the 
number of training points is greater than $2^9$. In this regime, the $DL_{sob}$ algorithm significantly outperforms the $DL_{rand}$ algorithm providing a factor of $5$ speed up and a significantly better convergence rate, even for this moderately high-dimensional problem.  
\begin{figure}[h!]
\begin{minipage}[t]{0.49\textwidth}	
\includegraphics[width=1.\textwidth]{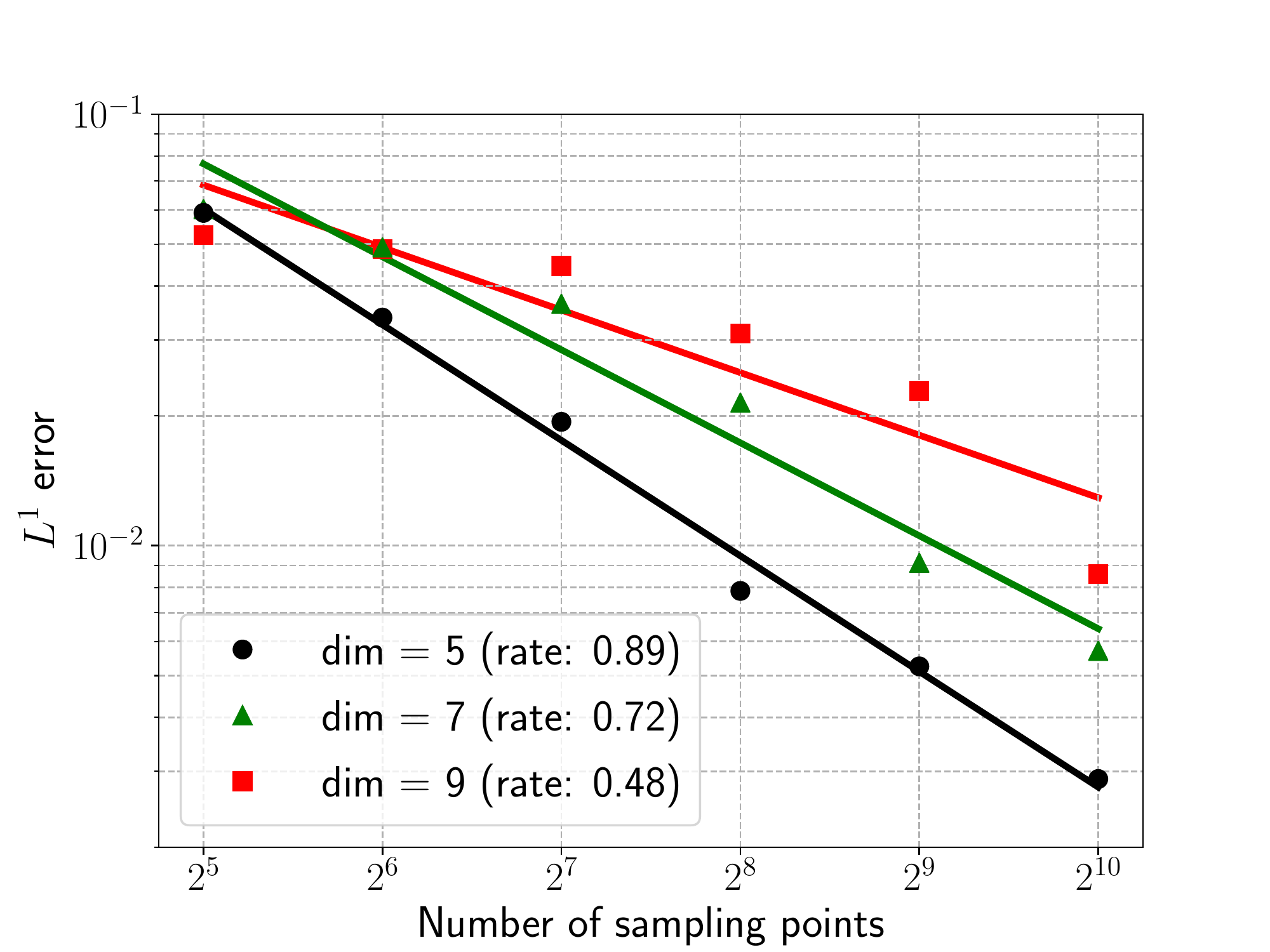}
\caption{$L^1$ generalization errors for approximating the fair price \eqref{eq:BSES} using $DL_{sob}$ for different input dimensions of the underlying map.}
\label{fig:multi_dim_BS}
\end{minipage}%
\hspace{0.01\textwidth}
\begin{minipage}[t]{0.49\textwidth}	
\includegraphics[width=1.\textwidth]{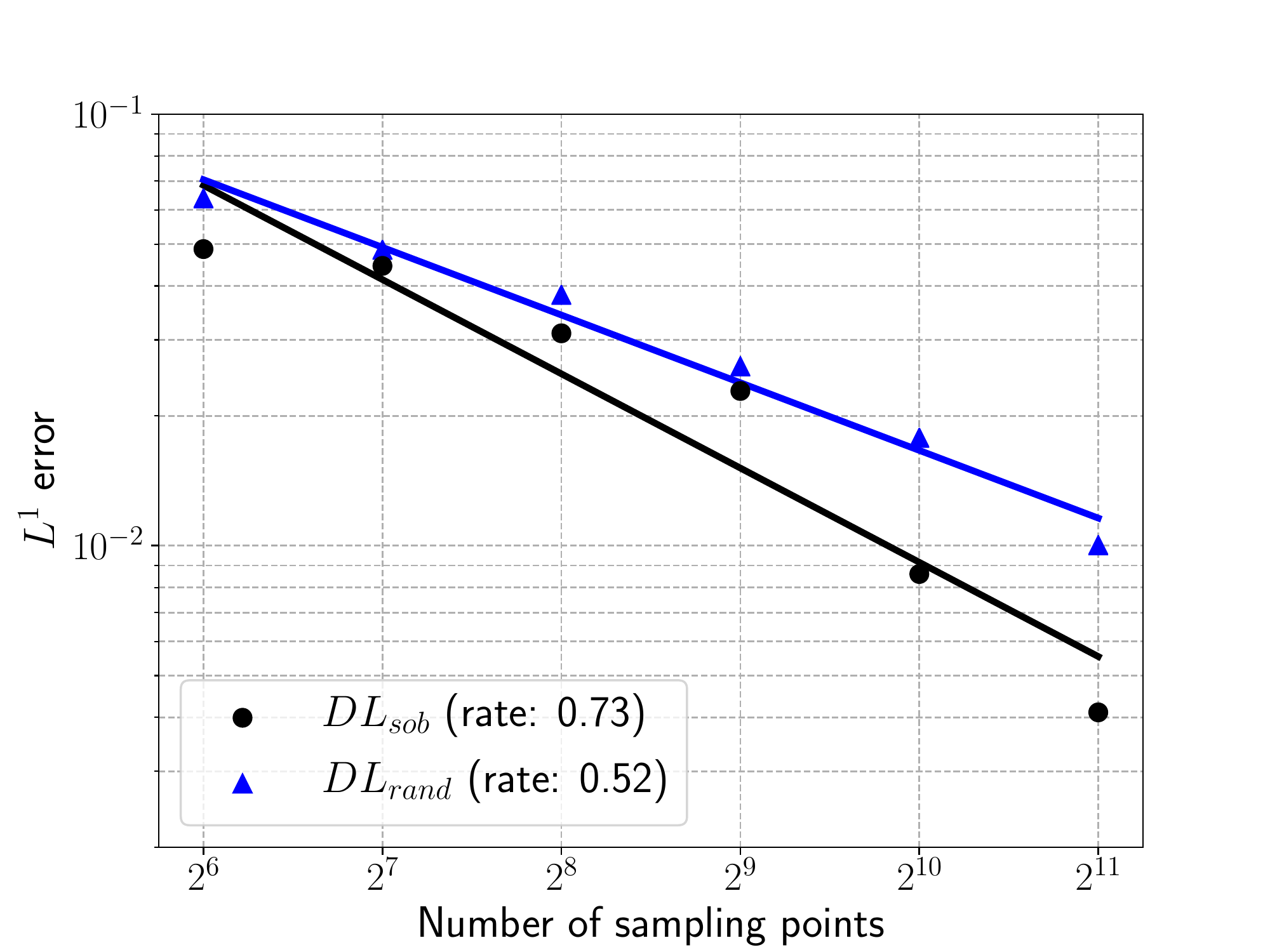}
\caption{$L^1$ generalization error for approximating the fair price \eqref{eq:BSES} for a fixed input dimensions of $9$ using $DL_{sob}$ compared to using $DL_{rand}$.}
\label{fig:BS_QMC_vs_MC}
\end{minipage}
\end{figure}
\subsection{Computational Fluid Dynamics: flow past airfoils}
In this section, we consider a prototypical example from computational fluid dynamics, namely that of a compressible flow past the well-studied RAE2822 airfoil \cite{UMRIDA}. The underlying parametric PDE \eqref{eq:ppde} are the well-known compressible Euler equations of aerodynamics (Eqns (5.1) of \cite{LMR1}) in two space dimensions, which are considered in an exterior domain of a \emph{perturbed} airfoil, parameterized by,
\begin{equation}
    \label{eq:afoil1}
    S(x;y)= \overline{S}(x) +  \sum\limits_{i=1}^d a_i \sin^{4}\left(\pi x^{\frac{\ln(0.5)}{\ln(x_{M_i})}}\right).
\end{equation}
Here, $\overline{S}:= [\overline{S}_L,\overline{S}_U]$ is the shape function for the reference RAE2822 airfoil (see figure \ref{fig:afoil}(left)) and the airfoil is deformed (perturbed) by the well-known \emph{Hicks-Henne bump functions} \cite{HH}, further parameterized by $a_i,x_{M_i}$ representing the amplitude and the location (maximum) of the perturbed basis functions, respectively.

The flow is defined by the following free-stream boundary conditions,
\begin{equation*}
    T^\infty = 1, \qquad M^\infty = 0.729, \qquad
    p^\infty = 1, \qquad \alpha = 2.31^{\circ},
\end{equation*}
where $\alpha$ denotes the angle of attack and $T,p,M$ are the temperature, pressure and Mach-number of the incident flow. 

The goal of the computation is to approximate the lift and drag coefficients of the flow:
\begin{align}
\label{eq:lift}
    C_L(y) &= \frac{1}{K^\infty(y)} \int_{S} p(y)n(y) \cdot \hat{y} ds, \\
    \label{eq:drag}
    C_D(y) &= \frac{1}{K^\infty(y)} \int_{S} p(y)n(y) \cdot \hat{x} ds,
\end{align}
where $K^\infty(y) = \rho^\infty(y)\|\mathbf{u}^\infty(y)\|^2/2$ is the free-stream kinetic energy with $\hat{y} = [-\sin(\alpha),\cos(\alpha)]$ and $\hat{x} = [\cos(\alpha),\sin(\alpha)]$. 

We remark that this configuration is a prototypical configuration for aerodynamic shape optimization. Here, the underlying maps $\map$ \eqref{eq:map} are the lift coefficient $C_L$ and the drag coefficient $C_D$, realized as functions of a \emph{design space} $Y = [0,1]^{2d}$. A variant of this problem was considered in recent papers \cite{LMR1,LMM1}, where the underlying problem was that of uncertainty quantification (UQ) whereas we consider a shape optimization problem here.

\begin{figure}[htbp]
    \begin{subfigure}{.36\textwidth}
        \centering
        \includegraphics[width=1\linewidth]{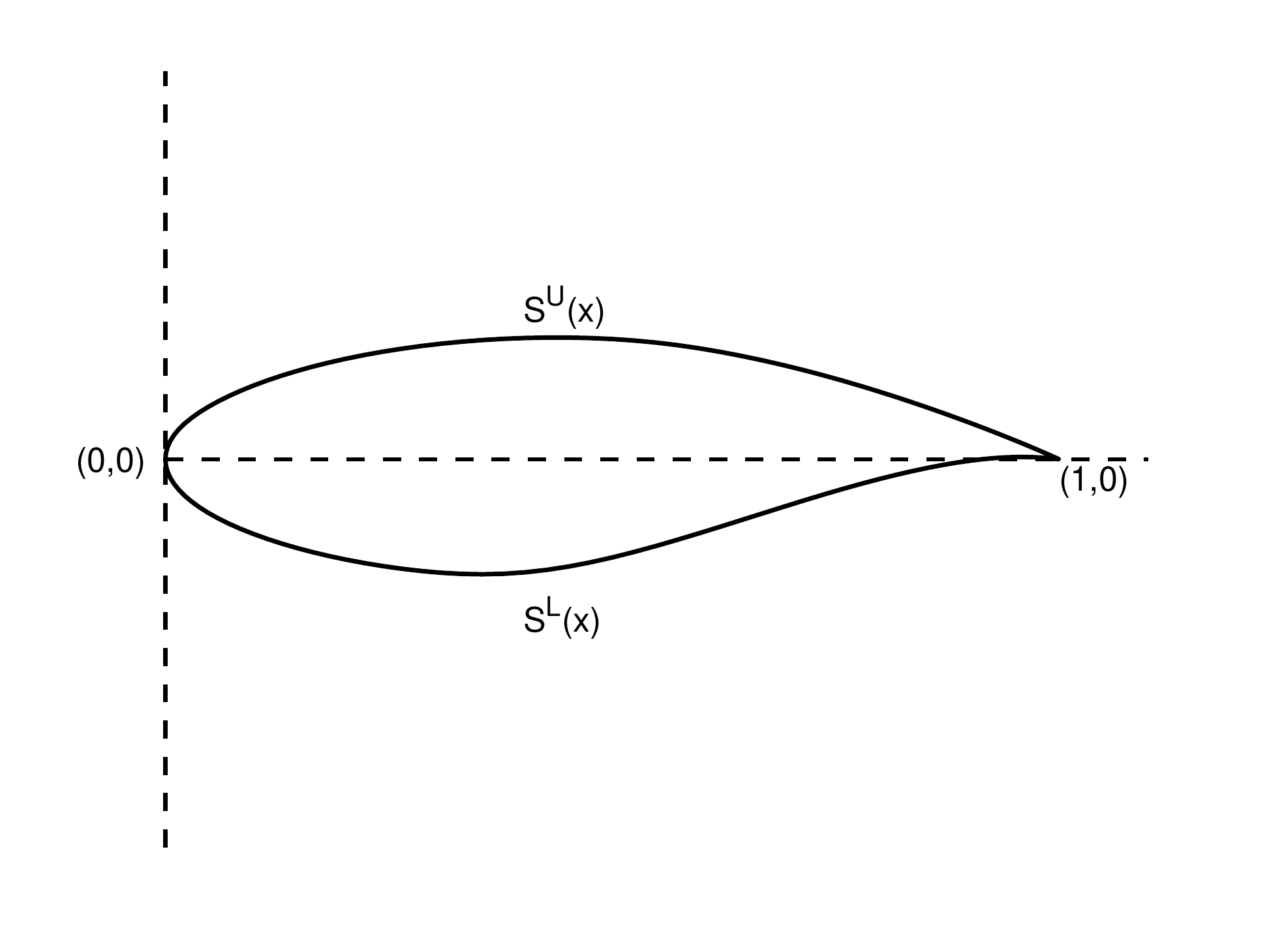}
        \caption{Reference shape}
    \end{subfigure}
    \begin{subfigure}{.3\textwidth}
        \centering\
        \includegraphics[width=1\linewidth]{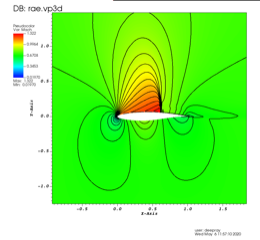}
        \caption{Mach number (Sample)}
    \end{subfigure}
    \begin{subfigure}{.3\textwidth}
        \centering\
        \includegraphics[width=1\linewidth]{{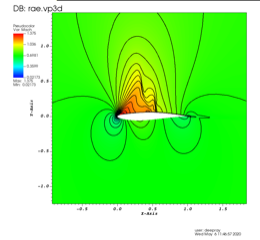}}
        \caption{Mach number (Sample)}
    \end{subfigure}
    
    \caption{Flow past a RAE2822 airfoil with different airfoil shapes Left: Reference airfoil Center and Right: Flow visualized with Mach number for two different samples. }
    \label{fig:afoil}
\end{figure}

The training data is generated by solving the compressible Euler equations in the above domain, with the heavily used NEWTUN code. Two realizations of the flow (with two different airfoil shapes) are depicted in figure \ref{fig:afoil} (center and right). 

We study the $DL_{sob}$ and $DL_{rand}$ algorithms for different sizes of the training set. In figure \ref{fig:airflow_results}, we show the generalization errors for the drag coefficient with $DL_{sob}$ algorithms for different numbers of shape variables (different dimensions $2d$) with $2d=6$ and $2d=10$. As seen from this figure, the generalization error for the $6$-dimensional problem decays at a rate of $0.6$. This is significantly smaller than the rate of $1$, predicted from \eqref{eq:gg2}. However, it is unclear if the Hardy-Krause variation $V_{HK}$ of the drag and the lift will be finite as the flow is very complicated and can even have shocks in this transsonic regime (see figure \ref{fig:afoil}). So, it is very possible that the Hardy-Krause variation blows up in this case (see the discussion on this issue in \cite{LMR1}) and this will certainly reduce the rate of convergence in \eqref{eq:gg2}. Nevertheless, the $DL_{sob}$ algorithm clearly outperforms the $DL_{rand}$ algorithm for this problem. As expected the $DL_{rand}$ converges at the rate of $0.5$. Hence, at the considered resolutions, the $DL_{sob}$ algorithm provides almost an order of magnitude speedup. 

In figure \ref{fig:airflow_results}, we also show the generalization errors for the drag coefficient with $DL_{sob}$ algorithm when $2d=10$, i.e for the $10$-dimensional problem. In this case, the generalization error decays at a slightly worse rate of $\approx 0.56$. This is not unexpected, given the logarithmic corrections in \eqref{eq:gg2}. Very similar results were seen for the lift coefficient and we omit them here. 

Finally, we push the $DL_{sob}$ algorithm to the limits by setting $2d=20$, i.e. considering the $20$-dimensional design space. From \eqref{eq:gg2}, we can see that even if the Hardy-Krause variation of the drag and lift coefficents was finite, the logarithmic terms can dominate for such high dimensions till the unreasonable number of $2^{20}$ training points is reached. Will the $DL_{sob}$ algorithm still outperform the $DL_{rand}$ algorithm in this extreme situation? The answer is provided in figure \ref{fig:airflow_results_20d}, where we plot the generalization errors for the lift coefficient, with respect to number of training samples, for both the $DL_{rand}$ and $DL_{sob}$ algorithms. We observe from this figure that although both deep learning algorithms approximate this underlying map with errors, decaying at the rate of $0.5$, the amplitude of the error is significantly smaller for the $DL_{sob}$ algorithm than for the $DL_{rand}$ algorithm, leading to approximate speedup of a factor of $5$ with the former over the latter. Thus, even in this realistic problem with maps of very low regularity and with a very high underlying dimension, the deep learning algorithm based on low-discrepancy sequences as training points, performs much better than the standard deep learning algorithm, based on random training data. 
\begin{figure}[h!]
\begin{minipage}[t]{0.49\textwidth}	
\includegraphics[width=1.\textwidth]{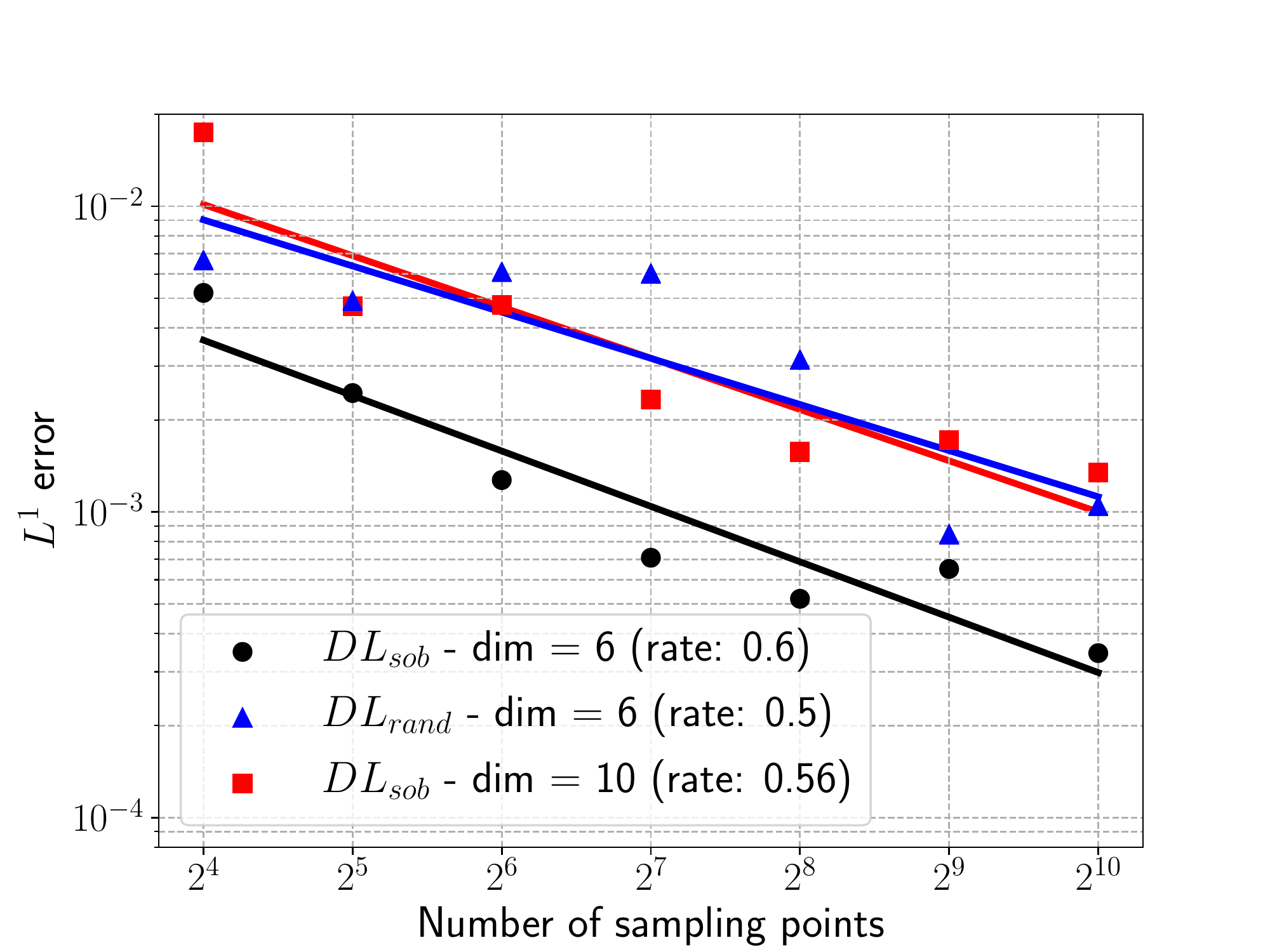}
\caption{$L^1$ generalization errors for approximating the drag coefficient \eqref{eq:drag} using $DL_{sob}$ for input dimensions of $6$ and $10$ compared to using $DL_{rand}$ for a fixed input dimension of $6$.}
\label{fig:airflow_results}
\end{minipage}%
\hspace{0.01\textwidth}
\begin{minipage}[t]{0.49\textwidth}	
\includegraphics[width=1.\textwidth]{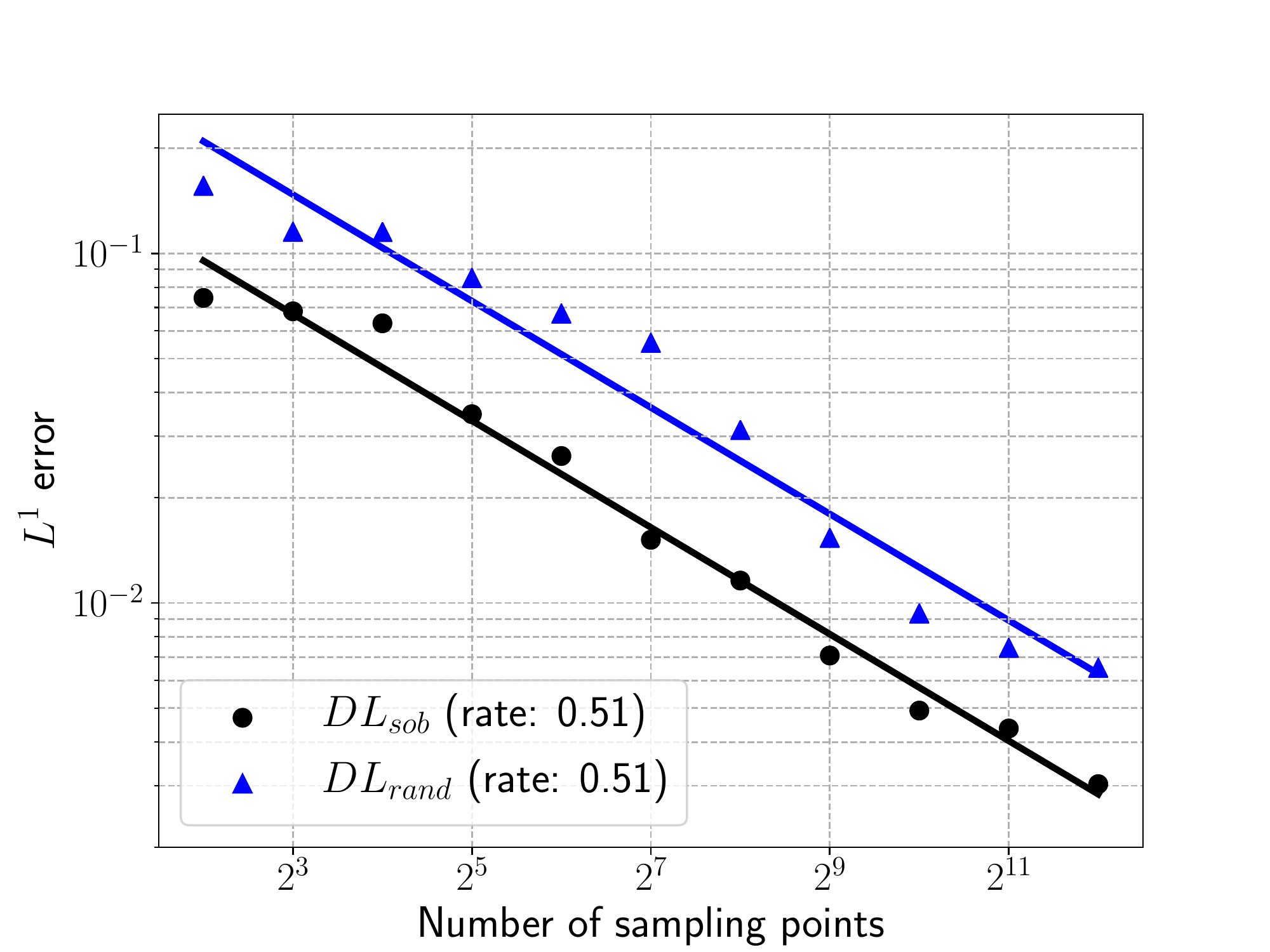}
\caption{$L^1$ generalization errors for approximating the lift coefficient \eqref{eq:lift} for a fixed input dimensions of $20$ using $DL_{sob}$ compared to using $DL_{rand}$.}
\label{fig:airflow_results_20d}
\end{minipage}
\end{figure}
\section{Conclusion}
\label{sec:5}
A key goal in scientific computing is that of simulation or prediction, i.e. evaluation of maps $\map$ \eqref{eq:map} for different inputs. Often, evaluation of these maps require expensive numerical solutions of underlying ordinary or partial differential equations. In the last couple of years, deep learning algorithms have come to the fore for designing surrogates for these computationally expensive maps. However, the standard paradigm of supervised learning, based on randomly chosen training data, suffers from a serious bottleneck for these problems. This is indicated by the bound \eqref{eq:genE2} on the resulting generalization error, i.e. large number of training samples are needed to obtain accurate surrogates. This makes the training process computationally expensive, as each evaluation of the map on a training sample might entail an expensive PDE solve. 

In this article, we follow the recent paper \cite{LMR1} and propose an alternative deep learning algorithm \ref{alg:DL} by considering low-discrepancy sequences (see definition \ref{def:lds}) as the \emph{training set}. We rigorously analyze this algorithm and obtain the upper bounds \eqref{eq:gg1}, \eqref{eq:gg2} on the underlying generalization error \eqref{eq:egen} in terms of the training error \eqref{eq:etrain} and a generalization gap, that for sufficiently regular functions (ones with bounded Hardy-Krause variation) and with sufficiently smooth activation functions for the neural network \eqref{eq:ann1} is shown to decay at a linear rate (with dimension-dependent logarithmic corrections) with respect to the number of training samples. When compared to the standard deep learning algorithm with randomly chosen points, these bounds suggest that at least for maps with desired regularity, up to moderately high underlying dimensions, the proposed deep learning algorithm will significantly outperform the standard version.

\par We test these theoretically derived assertions in a set of numerical experiments that considered approximating maps arising in mechanics (ODEs), computational finance (linear PDEs) and computational fluid dynamics (nonlinear PDEs). From the experiments, we observe that the proposed deep learning algorithm \ref{alg:DL}, with Sobol training points, is indeed accurate and significantly outperforms the one with random training points. The expected linear decay of the error is observed for problems with moderate dimensions. Even for problems with moderately high (up to $20$) dimensions, the proposed algorithm led to much smaller amplitude of errors, than the one based on randomly chosen points. This gain is also seen for maps with low regularity, for which the upper bound \eqref{eq:gg2}
 does not necessarily hold. Thus, the proposed deep learning algorithm is a very simple yet efficient alternative for building surrogates in the context of scientific computing. 
 
The proposed deep learning algorithm can be readily extended to a \emph{multi-level} version to further reduce the generalization error. This was already proposed in a recent paper \cite{LMM1}, although no rigorous theoretical analysis was carried out. This multi-level version would be very efficient for approximating maps with low regularity. We also aim to employ the proposed deep learning algorithms in other contexts in computational PDEs and finance in future papers. 

A key limitation of the theoretical analysis of the proposed algorithm \ref{alg:DL}, as expressed by the bound \eqref{eq:gg2} on the generalization error, is in the very nature of this bound. The bound is on the so-called generalization gap and does not aim to estimate the training error \eqref{eq:etrain} in any way. This is standard practice in theoretical machine learning \cite{AR1}, as the training error entails estimating the result of a stochastic gradient descent algorithm for a highly non-convex, very high-dimensional optimization problem and rigorous guarantees on this method are not yet available. Moreover, in practical terms, this is not an issue. Any reasonable machine learning procedure involves a systematic search of the hyperparameter space and finding suitable hyperparameter configurations that lead to the smallest training error. In particular, this training error is computed a posteriori. Thus, our bounds \eqref{eq:gg2} imply that \emph{as long as we train well, we generalize well}.

Another limitation of the proposed algorithm \ref{alg:DL} is the logarithmic dependence on dimension in the bound \eqref{eq:gg2} on the generalization gap. This dependence is also seen in practice (as shown in figures \ref{fig:multi_dim_BS} and \ref{fig:airflow_results}) and will impede the efficiency of the proposed algorithm for approximating maps in very high dimensions (say $d > 20$). However, this issue has been investigated in some depth in recent literature on Quasi-Monte Carlo methods \cite{schwab_qmc} and references therein and a class of \emph{high-order quasi Monte Carlo} methods have been proposed which can handle this \emph{curse of dimensionality} at least for maps with sufficient regularity. Such methods could alleviate this issue in the current context and we explore them in a forthcoming paper. 

\section*{Acknowledgements.} The research of SM and TKR was partially supported by European Research Council Consolidator grant ERCCoG 770880: COMANFLO. The authors thank Dr. Deep Ray (Rice University, Houston, USA) for helping set up the CFD numerical experiment.

\bibliographystyle{abbrvnat}
\bibliography{refs}

\begin{thebibliography}{41}
\providecommand{\natexlab}[1]{#1}
\providecommand{\url}[1]{\texttt{#1}}
\expandafter\ifx\csname urlstyle\endcsname\relax
  \providecommand{\doi}[1]{doi: #1}\else
  \providecommand{\doi}{doi: \begingroup \urlstyle{rm}\Url}\fi

\bibitem[Arora et~al.(2018)Arora, Ge, Neyshabur, and Zhang]{AR1}
S.~Arora, R.~Ge, B.~Neyshabur, and Y.~Zhang.
\newblock Stronger generalization bounds for deep nets via a compression
  approach.
\newblock In \emph{Proceedings of the 35th International Conference on Machine
  Learning, {ICML}}, volume~80 of \emph{Proceedings of Machine Learning
  Research}, pages 254--263, 2018.

\bibitem[Basu and Owen(2016)]{BO1}
K.~Basu and A.~B. Owen.
\newblock Transformations and hardy--krause variation.
\newblock \emph{SIAM Journal on Numerical Analysis}, 54\penalty0 (3):\penalty0
  1946--1966, 2016.

\bibitem[Caflisch(1998)]{CAF1}
R.~E. Caflisch.
\newblock Monte carlo and quasi-monte carlo methods.
\newblock \emph{Acta Numerica}, 7:\penalty0 1--49, 1998.

\bibitem[Campolieti and Makarov(2016)]{bs_theory}
G.~Campolieti and R.~N. Makarov.
\newblock \emph{Financial mathematics: a comprehensive treatment}.
\newblock CRC Press, 2016.

\bibitem[Cucker and Smale(2002)]{CS1}
F.~Cucker and S.~Smale.
\newblock On the mathematical foundations of learning.
\newblock \emph{Bulletin of the American Mathematical Society}, 39\penalty0
  (1):\penalty0 1--49, 2002.

\bibitem[Dick et~al.(2013)Dick, Kuo, and Sloan]{DKS1}
J.~Dick, F.~Y. Kuo, and I.~H. Sloan.
\newblock High-dimensional integration: the quasi-monte carlo way.
\newblock \emph{Acta Numerica}, 22:\penalty0 133--288, 2013.

\bibitem[Dick et~al.(2016)Dick, Kuo, Le~Gia, and Schwab]{schwab_qmc}
J.~Dick, F.~Y. Kuo, Q.~T. Le~Gia, and C.~Schwab.
\newblock Multilevel higher order qmc petrov--galerkin discretization for
  affine parametric operator equations.
\newblock \emph{SIAM Journal on Numerical Analysis}, 54\penalty0 (4):\penalty0
  2541--2568, 2016.

\bibitem[E and Yu(2018)]{E1}
W.~E and B.~Yu.
\newblock The deep ritz method: a deep learning-based numerical algorithm for
  solving variational problems.
\newblock \emph{Communications in Mathematics and Statistics}, 6\penalty0
  (1):\penalty0 1--12, 2018.

\bibitem[E et~al.(2017)E, Han, and Jentzen]{E2}
W.~E, J.~Han, and A.~Jentzen.
\newblock Deep learning-based numerical methods for high-dimensional parabolic
  partial differential equations and backward stochastic differential
  equations.
\newblock \emph{Communications in Mathematics and Statistics}, 5\penalty0
  (4):\penalty0 349--380, 2017.

\bibitem[E et~al.(2019)E, Ma, and Wu]{EMW1}
W.~E, C.~Ma, and L.~Wu.
\newblock A priori estimates of the population risk for two-layer neural
  networks.
\newblock \emph{Communications in Mathematical Sciences}, 17\penalty0
  (5):\penalty0 1407--1425, 2019.

\bibitem[Evans et~al.(2018)Evans, Jumper, Kirkpatrick, Sifre, Green, Qin,
  Zidek, Nelson, Bridgland, Penedones, et~al.]{deepfold}
R.~Evans, J.~Jumper, J.~Kirkpatrick, L.~Sifre, T.~Green, C.~Qin, A.~Zidek,
  A.~Nelson, A.~Bridgland, H.~Penedones, et~al.
\newblock De novo structure prediction with deep-learning based scoring.
\newblock \emph{Annual Review of Biochemistry}, 77\penalty0 (363-382):\penalty0
  6, 2018.

\bibitem[Ghanem et~al.(2017)Ghanem, Higdon, and Owhadi]{UQbook}
R.~Ghanem, D.~Higdon, and H.~Owhadi.
\newblock \emph{Handbook of uncertainty quantification}, volume~6.
\newblock Springer, 2017.

\bibitem[Giles(2008)]{GIL1}
M.~B. Giles.
\newblock Multilevel monte carlo path simulation.
\newblock \emph{Operations Research}, 56\penalty0 (3):\penalty0 607--617, 2008.

\bibitem[Glorot and Bengio(2010)]{xavier}
X.~Glorot and Y.~Bengio.
\newblock Understanding the difficulty of training deep feedforward neural
  networks.
\newblock In \emph{Proceedings of the thirteenth International Conference on
  Artificial Intelligence and Statistics}, pages 249--256, 2010.

\bibitem[Goodfellow et~al.(2016)Goodfellow, Bengio, and Courville]{DLbook}
I.~Goodfellow, Y.~Bengio, and A.~Courville.
\newblock \emph{Deep learning}.
\newblock MIT press, 2016.

\bibitem[Halton(1960)]{hal}
J.~H. Halton.
\newblock On the efficiency of certain quasi-random sequences of points in
  evaluating multi-dimensional integrals.
\newblock \emph{Numerische Mathematik}, 2\penalty0 (1):\penalty0 84--90, 1960.

\bibitem[Han et~al.(2018)Han, Jentzen, and E]{E3}
J.~Han, A.~Jentzen, and W.~E.
\newblock Solving high-dimensional partial differential equations using deep
  learning.
\newblock \emph{Proceedings of the National Academy of Sciences}, 115\penalty0
  (34):\penalty0 8505--8510, 2018.

\bibitem[Heinrich(2001)]{HEIN1}
S.~Heinrich.
\newblock Multilevel monte carlo methods.
\newblock In \emph{International Conference on Large-Scale Scientific
  Computing}, pages 58--67. Springer, 2001.

\bibitem[Hirsch et~al.(2019)Hirsch, Wunsch, Szumbarski, Pons-Prats,
  et~al.]{UMRIDA}
C.~Hirsch, D.~Wunsch, J.~Szumbarski, J.~Pons-Prats, et~al.
\newblock Uncertainty management for robust industrial design in aeronautics.
\newblock Notes on Numerical Fluid Mechanics and Multidisciplinary Design.
  Springer, 2019.

\bibitem[Ioffe and Szegedy(2015)]{batchnorm}
S.~Ioffe and C.~Szegedy.
\newblock Batch normalization: Accelerating deep network training by reducing
  internal covariate shift.
\newblock In \emph{Proceedings of the 32nd International Conference on Machine
  Learning, {ICML}}, volume~37, pages 448--456, 2015.

\bibitem[Kingma and Ba(2015)]{ADAM}
D.~P. Kingma and J.~Ba.
\newblock Adam: {A} method for stochastic optimization.
\newblock In \emph{3rd International Conference on Learning Representations,
  {ICLR} 2015}, 2015.

\bibitem[Korn and Zeytun(2013)]{cfs_bspde}
R.~Korn and S.~Zeytun.
\newblock Efficient basket monte carlo option pricing via a simple analytical
  approximation.
\newblock \emph{Journal of Computational and Applied Mathematics},
  243:\penalty0 48--59, 2013.

\bibitem[Lagaris et~al.(1998)Lagaris, Likas, and Fotiadis]{Lag1}
I.~E. Lagaris, A.~Likas, and D.~I. Fotiadis.
\newblock Artificial neural networks for solving ordinary and partial
  differential equations.
\newblock \emph{IEEE Transactions on Neural Networks}, 9\penalty0 (5):\penalty0
  987--1000, 1998.

\bibitem[LeCun et~al.(2015)LeCun, Bengio, and Hinton]{DL-nat}
Y.~LeCun, Y.~Bengio, and G.~Hinton.
\newblock Deep learning.
\newblock \emph{Nature}, 521\penalty0 (7553):\penalty0 436--444, 2015.

\bibitem[Lye et~al.(2019)Lye, Mishra, and Molinaro]{LMM1}
K.~O. Lye, S.~Mishra, and R.~Molinaro.
\newblock A multi-level procedure for enhancing accuracy of machine learning
  algorithms.
\newblock \emph{arXiv preprint arXiv:1909.09448}, 2019.

\bibitem[Lye et~al.(2020)Lye, Mishra, and Ray]{LMR1}
K.~O. Lye, S.~Mishra, and D.~Ray.
\newblock Deep learning observables in computational fluid dynamics.
\newblock \emph{Journal of Computational Physics}, page 109339, 2020.

\bibitem[Masters et~al.(2017)Masters, Taylor, Rendall, Allen, and Poole]{HH}
D.~A. Masters, N.~J. Taylor, T.~Rendall, C.~B. Allen, and D.~J. Poole.
\newblock Geometric comparison of aerofoil shape parameterization methods.
\newblock \emph{AIAA Journal}, pages 1575--1589, 2017.

\bibitem[Mishra(2019)]{DL_SM1}
S.~Mishra.
\newblock A machine learning framework for data driven acceleration of
  computations of differential equations.
\newblock \emph{Mathematics in Engineering}, 1:\penalty0 118, 2019.

\bibitem[Neyshabur et~al.(2019)Neyshabur, Li, Bhojanapalli, LeCun, and
  Srebro]{NEYS1}
B.~Neyshabur, Z.~Li, S.~Bhojanapalli, Y.~LeCun, and N.~Srebro.
\newblock The role of over-parametrization in generalization of neural
  networks.
\newblock In \emph{7th International Conference on Learning Representations,
  {ICLR} 2019}, 2019.

\bibitem[Niederreiter(1992)]{nd1}
H.~Niederreiter.
\newblock \emph{Random number generation and quasi-Monte Carlo methods},
  volume~63.
\newblock SIAM, 1992.

\bibitem[Owen(1997)]{owen1}
A.~B. Owen.
\newblock Monte carlo variance of scrambled net quadrature.
\newblock \emph{SIAM Journal on Numerical Analysis}, 34\penalty0 (5):\penalty0
  1884--1910, 1997.

\bibitem[Owen(2005)]{owen}
A.~B. Owen.
\newblock Multidimensional variation for quasi-monte carlo.
\newblock In \emph{Contemporary Multivariate Analysis And Design Of
  Experiments: In Celebration of Professor Kai-Tai Fang's 65th Birthday}, pages
  49--74. World Scientific, 2005.

\bibitem[Paszke et~al.(2017)Paszke, Gross, Chintala, Chanan, Yang, DeVito, Lin,
  Desmaison, Antiga, and Lerer]{pytorch}
A.~Paszke, S.~Gross, S.~Chintala, G.~Chanan, E.~Yang, Z.~DeVito, Z.~Lin,
  A.~Desmaison, L.~Antiga, and A.~Lerer.
\newblock Automatic differentiation in pytorch.
\newblock In \emph{Workshop Proceedings of Neural Information Processing
  Systems}, 2017.

\bibitem[Quarteroni et~al.(2015)Quarteroni, Manzoni, and Negri]{ROMbook}
A.~Quarteroni, A.~Manzoni, and F.~Negri.
\newblock \emph{Reduced basis methods for partial differential equations: an
  introduction}, volume~92.
\newblock Springer, 2015.

\bibitem[Raissi and Karniadakis(2018)]{KAR1}
M.~Raissi and G.~E. Karniadakis.
\newblock Hidden physics models: Machine learning of nonlinear partial
  differential equations.
\newblock \emph{Journal of Computational Physics}, 357:\penalty0 125--141,
  2018.

\bibitem[Raissi et~al.(2018)Raissi, Yazdani, and Karniadakis]{KAR2}
M.~Raissi, A.~Yazdani, and G.~E. Karniadakis.
\newblock Hidden fluid mechanics: A navier-stokes informed deep learning
  framework for assimilating flow visualization data.
\newblock \emph{arXiv preprint arXiv:1808.04327}, 2018.

\bibitem[Rasmussen(2003)]{GPRbook}
C.~E. Rasmussen.
\newblock Gaussian processes in machine learning.
\newblock In \emph{Summer School on Machine Learning}, pages 63--71. Springer,
  2003.

\bibitem[Ray and Hesthaven(2018)]{DR1}
D.~Ray and J.~S. Hesthaven.
\newblock An artificial neural network as a troubled-cell indicator.
\newblock \emph{Journal of Computational Physics}, 367:\penalty0 166--191,
  2018.

\bibitem[Shalev-Shwartz and Ben-David(2014)]{MLbook}
S.~Shalev-Shwartz and S.~Ben-David.
\newblock \emph{Understanding machine learning: From theory to algorithms}.
\newblock Cambridge University Press, 2014.

\bibitem[Sobol'(1967)]{sobol}
I.~M. Sobol'.
\newblock On the distribution of points in a cube and the approximate
  evaluation of integrals.
\newblock \emph{Zhurnal Vychislitel'noi Matematiki i Matematicheskoi Fiziki},
  7\penalty0 (4):\penalty0 784--802, 1967.

\bibitem[Tompson et~al.(2017)Tompson, Schlachter, Sprechmann, and Perlin]{INC}
J.~Tompson, K.~Schlachter, P.~Sprechmann, and K.~Perlin.
\newblock Accelerating eulerian fluid simulation with convolutional networks.
\newblock In \emph{Proceedings of the 34th International Conference on Machine
  Learning, {ICML}}, volume~70, pages 3424--3433, 2017.

\end{thebibliography}

\end{document}